\documentclass{article}

    \PassOptionsToPackage{numbers, compress}{natbib}

\usepackage[preprint]{neurips_2026}

\usepackage[utf8]{inputenc} 
\usepackage[T1]{fontenc}    
\usepackage[backref=page]{hyperref}       
\usepackage{microtype}
\usepackage{graphicx}
\usepackage{booktabs} 
\usepackage{makecell}
\usepackage{xcolor}         
\usepackage{wrapfig}
\usepackage{adjustbox}
\usepackage{import}

\definecolor{linkcolor}{RGB}{83,83,182}

\hypersetup
{
  colorlinks=true,
  citecolor=linkcolor,
  linkcolor= linkcolor,
  urlcolor=blue,
  pdfstartview=FitH,
  pdftitle={},
  pdfauthor={},
  pdfcreator={},
  pdfsubject={},
  pdfkeywords={},     
}

\usepackage{amsmath}
\usepackage{amssymb}
\usepackage{mathtools}
\usepackage{amsthm}
\usepackage{algorithm}
\usepackage{algorithmic}
\usepackage{thmtools}
\usepackage{thm-restate}
\usepackage{paralist}
\usepackage{subcaption}
\usepackage{multirow}

\usepackage[capitalize,noabbrev]{cleveref}

\theoremstyle{plain}
\newtheorem{theorem}{Theorem}[section]
\newtheorem{proposition}[theorem]{Proposition}
\newtheorem{lemma}[theorem]{Lemma}
\newtheorem{corollary}[theorem]{Corollary}
\theoremstyle{definition}

\newtheorem{assumption}[theorem]{Assumption}
\newtheorem{claim}{Claim}
\theoremstyle{remark}

\usepackage[disable,textsize=tiny]{todonotes}

\newcommand{\X}{\mathcal X}
\newcommand{\Y}{\mathcal Y}
\newcommand{\D}{\mathcal D}
\newcommand{\Z}{\mathcal Z}
\newcommand{\bZ}{\bZN}
\newcommand{\bYN}{\bar{\Y}_N}
\newcommand{\bZN}{\bar{\Z}_N}
\newcommand{\A}{\mathcal A}

\newcommand{\PP}{\mathbb P}
\newcommand{\E}{\mathbb E}
\newcommand{\1}{\mathbf 1}
\newcommand{\YN}{\bYN}

\newcommand{\tendsto}{\xrightarrow[]{N\rightarrow\infty}}
\newcommand{\PX}{\PP_X}
\newcommand{\PY}{\PP_Y}

\newcommand{\hPXY}{\hat{\PP}_{XY}}
\newcommand{\PXY}{\PP_{XY}}
\newcommand{\hPX}{\hat{\PP}_X}

\newcommand{\Xc}{\mathcal X^{(c)}}   
\newcommand{\Xk}{\mathcal X^{(k)}}   
\newcommand{\Yc}{\mathcal Y^{(c)}}

\newcommand{\Yk}{\mathcal Y^{(k)}}

\newcommand{\dx}{d_x}

\newcommand{\dxc}{d_{x,c}}

\newcommand{\dxk}{d_{x,k}}
\newcommand{\dyc}{d_{y,c}}

\newcommand{\dyk}{d_{y,k}}

\newcommand{\dham}{d_{\mathrm{Ham}}}
\newcommand{\dcont}{d_{cont}}
\newcommand{\distZ}{\mathrm{dist}_{d_{\Z}}}
\newcommand{\diamZ}{\mathrm{diam}_{d_{\Z}}}
\newcommand{\diamh}{\mathrm{diam}}
\newcommand{\diam}{\diamZ}
\newcommand{\diamc}{\mathrm{diam}_{cont}}

\newcommand{\fn}{f_N}
\newcommand{\fxn}{f_N'}
\newcommand{\fs}{f^*_N}
\newcommand{\fstar}{f^*}

\newcommand{\proofparagraph}[1]{\medskip\noindent\textbf{#1.}}

\newcommand{\Lo}{\mathcal L}

\newcommand{\lnu}{L^1(\nu)}
\newcommand{\lnuz}{L^1(\nu;\bZN)}

\newcommand{\fpart}{\hat{f}_{\pi_N}}
\newcommand{\fpartn}{\bar{f}_{\pi_N}}

\newcommand{\pts}{\emph{Partition Trees}}
\newcommand{\pt}{Partition Tree}
\newcommand{\pfs}{\emph{Partition Forests}}
\newcommand{\pf}{Partition Forest}

\newcommand{\ie}{i.e.,}
\newcommand{\eg}{e.g.,}

\graphicspath{
  {figures/}
  {figures/feature_importance/}%
    {tikz}%
    {figures/}
    {figures/feature_importance/}%
    {tikz}%
}

\newcolumntype{H}{>{\setbox0=\hbox\bgroup}c<{\egroup}@{}}

\crefname{ALC@unique}{Line}{Lines}
\Crefname{ALC@unique}{Line}{Lines}
\crefname{algorithm}{Alg.}{Algs.}
\crefname{table}{Tab.}{Tabs.}
\crefname{figure}{Fig.}{Figs.}
\crefname{section}{Sec.}{Secs.}
\crefname{theorem}{Thm}{Thms}
\crefname{lemma}{Lem.}{Lems.}
\crefname{appendix}{App.}{Apps.}
\newcommand{\scikit}{\texttt{scikit-learn}}

\everymath{\displaystyle}

\title{Partition Tree: Conditional Density Estimation over General Outcome Spaces}

\author{%
  Felipe Angelim\thanks{Corresponding authors.} \\
  Independent Researcher \\
  \texttt{felipeangelim@pm.me}
  \And
  Alessandro Leite\footnotemark[1] \\
  INSA Rouen Normandy, Normandy University, LITIS \\
  Rouen, France \\
  \texttt{aleite@insa-rouen.fr}
}

\begin{document}

\maketitle

\begin{abstract}
We propose \pt, a novel tree-based framework for conditional density estimation over general outcome spaces that supports both continuous and categorical variables within a unified formulation. Our approach models conditional distributions as piecewise-constant densities on data-adaptive partitions and learns trees by directly minimizing conditional negative log-likelihood. This yields a scalable, nonparametric alternative to existing probabilistic trees that does not make parametric assumptions about the target distribution. We further introduce Partition Forest, a bagging extension obtained by averaging conditional densities. Empirically, we demonstrate improved probabilistic prediction over CART-style trees and competitive performance compared to state-of-the-art probabilistic tree methods and Random Forests.
\end{abstract}

\section{Introduction}

Decision trees are among the earliest and most widely used learning algorithms, valued for their computational efficiency, intrinsic interpretability, and ability to capture nonlinear interactions through recursive partitioning \cite{morgan1963problems,kass1980exploratory,breiman2017classification,loh2014fifty}. Early methods such as Automatic Interaction Detection (AID)~\cite{morgan1963problems} and CHAID \cite{kass1980exploratory} introduced the idea of fitting piecewise-constant models on data-adaptive partitions of the input space. The CART framework later systematized tree construction for both classification and regression and popularized pruning via cost-complexity regularization \cite{breiman2017classification}.

Although classical trees focus on point prediction or class probabilities, there has been growing interest in extending tree-based methods to probabilistic prediction and conditional density estimation. Recent approaches include CADET~\cite{cousins2019cadet}, which employs parametric conditional densities at the leaves and selects splits using a cross-entropy criterion, and CDTree~\cite{yang2024conditional}, which jointly optimizes splits and leaf-wise histograms via a minimal description length objective. Although these methods demonstrate strong empirical performance, CADET's leaf models impose parametric assumptions, whereas CDTree's joint split-and-histogram optimization can be computationally demanding, limiting scalability on large datasets.

In this paper, we introduce \emph{\pt}, a tree-based framework for conditional density estimation over general outcome spaces that supports both continuous and categorical variables. Our approach models conditional distributions as piecewise-constant densities defined on data-adaptive partitions of the joint covariate-outcome space. Formally, we adopt a measure-theoretic perspective in which conditional densities are defined via probability measures and Radon–Nikodym derivatives with respect to these partitions. Within this unified framework, both classification and regression emerge as cases of conditional density estimation. In this case, trees are constructed greedily by maximizing an empirical log-loss objective, which is equivalent to minimizing the conditional negative log-likelihood of the resulting density estimator. This objective naturally accommodates heteroscedastic noise and leads to empirical robustness to redundant features.

We further extend \pt\ to ensembles, termed \pfs, by averaging the predicted conditional densities across multiple trees. Experiments on a diverse set of classification and regression benchmarks demonstrate that \pfs\ consistently improve probabilistic prediction compared to CART-style trees. On classification tasks, they achieve lower log-loss on several datasets, while \pfs\ outperform Random Forests on the majority of benchmarks. For regression, \pfs\ provide competitive point predictions and provide strong probabilistic performance, outperforming CADET on larger datasets without relying on parametric assumptions. Finally, semi-synthetic experiments analyze the robustness to both homoscedastic and heteroscedastic noise, and to redundant features. 

\section{Methodology}\label{sec:methodology}

\subsection{Setting}\label{sec:setting}

We first introduce the measure-theoretic notation used throughout the paper.
This formulation makes explicit the connection between the underlying
measure-theoretic objects and the tree operations used by our method, and it
enables a unified treatment of continuous and categorical variables.

Let $(\Omega,\mathcal A,\PP)$ be a probability space. Let $X\in\X$ denote the covariates and
$Y\in\Y$ denote the outcome. We observe a dataset
$\D=\{(x_i,y_i)\}_{i=1}^N$ of i.i.d.\ samples drawn from the joint distribution
$\PP_{XY}$, the push-forward of $\PP$ on the product space
$\Z:=\X\times\Y$.

A measurable cell $A\subseteq\Z$ is written as a product set
$A=A_X\times A_Y$, where $A_X\subseteq\X$ and $A_Y\subseteq\Y$ denote its covariate
and outcome components, respectively. We use subscripts to indicate projections of a cell onto a specific coordinate space, and omit subscripts when referring to
subsets of the joint space. Let $\mathrm{pr}_X:\X\times\Y\to\X$ and
$\mathrm{pr}_Y:\X\times\Y\to\Y$ denote the canonical coordinate projections.
Accordingly, for any $A\subseteq\X\times\Y$, we define
$A_X:=\mathrm{pr}_X(A)$ and $A_Y:=\mathrm{pr}_Y(A)$.
Let
\begin{align}
n_{XY}(A) &:= \sum\limits_{i=1}^{N} \1\{(x_i, y_i) \in A\} = N\;\hPXY(A) \label{eq:nxy}\\
n_X(A) &:= \sum\limits_{i=1}^{N} \1\{x_i \in A_X\} = N\;\hPX(A_X) \label{eq:nx}
\end{align}

denote the number of samples in a cell and in its $\X$-projection. We use $\PX(A) := \PX(A_X)$, $\PY(A) := \PY(A_Y)$, $\mu_Y(A) := \mu_Y(A_Y)$ for simplicity. In addition, let $\pi_N(\{z_i\}_{i=1}^N)$, $z_i \in \Z$, be a partitioning rule that maps datasets of size $N$ to measurable partitions $\pi_N(\{z_i\}_{i=1}^N) = \{ A_{i}\}_{i=1}^K$. We overload the notation $\pi_N$ to also denote the partition generated by $\pi_N(\{z_i\}_{i=1}^N)$ when the dataset is fixed or irrelevant from context. In particular, $\pi_N(\D)$ implies that $|\D|=N$. For any $z\in\Z$, we denote by $\pi_N[z]$ the unique cell $A\in\pi_N$ such that
$z\in A$.

We allow both $\X$ and $\Y$ to contain continuous and categorical coordinates.
Throughout, cells preserve a product structure: continuous coordinates are
restricted by intervals, categorical coordinates by subsets of their alphabets,
and candidate splits act on one coordinate at a time. For mixed-type outcomes,git 
$\mu_Y$ is understood as the product of Lebesgue measure on continuous
coordinates and counting measure on categorical ones. The metric notions used
later in the consistency argument are deferred to the appendix.

%
\subsection{Conditional Density Estimation via Radon--Nikodym Derivatives}\label{sec:estimating_density}



Let $(X, Y)$ be a random pair taking values in $\X \times \Y$, where $\X$ and $\Y$ are measurable spaces. Let $\PXY$ and $\PX$ denote the joint and marginal distributions, respectively. We fix a $\sigma$-finite reference measure $\mu_Y$ on $\Y$, chosen as the counting measure for discrete outcomes and Lebesgue measure for continuous ones. We assume throughout that the joint distribution admits a conditional density of $Y$ given $X$ with respect to $\mu_Y$.

\begin{assumption}[Existence of conditional density]\label{ass:rn}
The joint distribution $\PXY$ is absolutely continuous with respect to
$\PX\otimes\mu_Y$, i.e.,
\[
\PXY \ll \PX\otimes\mu_Y
\]
\end{assumption}

Under~\cref{ass:rn}, the Radon--Nikodym derivative
\[
f := \frac{d\PXY}{d(\PX\otimes\mu_Y)}
\]
exists and defines a version of the conditional density of $Y$ given $X$ with respect to $\mu_Y$, in the sense that for $\PX$-almost every $x$ and all measurable $B\subseteq\Y$,
\[
\mathbb{P}(Y\in B\mid X=x)=\int_B f(x,y)\,\mu_Y(dy).
\]

\subsubsection{Piecewise-constant approximation on a partition}

Define $\nu := \PX\otimes\mu_Y$ and let $\bar{\Z}\subseteq\Z$ be a measurable subset with $\nu(\bar{\Z})<\infty$.
When $\nu(\Z)<\infty$, one may take $\bar{\Z}=\Z$. Let $\pi$ be a measurable
partition of $\bar{\Z}$ into cells $A=A_X\times A_Y$.

On a cell $A$ with $0<\nu(A)<\infty$, the best constant approximation of $f$ in
the sense of matching the $\PXY$-mass of the cell is obtained by choosing
$c_A\ge 0$ such that
\[
\int_A c_A\,\nu(dz)=\PXY(A).
\]
Solving gives the cell-average value
\begin{equation}
\label{eq:ca_improved}
c_A = \frac{\PXY(A)}{\nu(A)}.
\end{equation}
This yields the piecewise-constant estimator
\begin{equation}
\label{eq:cond_exp_improved}
f_{\pi}(z) =
\begin{cases}
\displaystyle \frac{\PXY(\pi[z])}{\nu(\pi[z])}, & z\in\bar{\Z},\\[0.6em]
0, & \text{otherwise.}
\end{cases}
\end{equation}
Equivalently, $f_{\pi}=\E_\nu[f\mid\sigma(\pi)]$ $\nu$-a.e.\ on $\bar{\Z}$, i.e.,
the conditional expectation of $f$ given the $\sigma$-algebra generated by the
partition.

Because $\nu=\PX\otimes\mu_Y$ factorizes over $X$ and $Y$, a product cell
$A=A_X\times A_Y$ satisfies
\[
\nu(A)=\PX(A_X)\,\mu_Y(A_Y),
\]
so \eqref{eq:ca_improved} can be written more explicitly as
\[
c_A=\frac{\PXY(A)}{\PX(A_X)\,\mu_Y(A_Y)}.
\]

\subsubsection{Empirical estimator from counts}

We estimate $\PXY(A)$ and $\PX(A_X)$ from the sample
$\D=\{(x_i,y_i)\}_{i=1}^N$. Recall the counts defined in Equations \eqref{eq:nxy} and \eqref{eq:nx}. Then $\hPXY(A)=n_{XY}(A)/N$ and $\hPX(A_X)=n_X(A)/N$, and the
plug-in estimate of $c_A$ becomes
\begin{equation}
\label{eq:chat_improved}
\hat c_A
=
\frac{\hPXY(A)}{\hPX(A_X)\,\mu_Y(A_Y)}
=
\frac{n_{XY}(A)}{n_X(A)\,\mu_Y(A_Y)},
\end{equation}
defined whenever $n_X(A)>0$ and $\mu_Y(A_Y)>0$. Substituting \eqref{eq:chat_improved}
into \eqref{eq:cond_exp_improved} yields a fully explicit estimator once a
partition $\pi_N$ is fixed.


\subsubsection{Truncation and practical estimator}

When $\mu_Y(\Y)=\infty$ (for instance, because $\Y$ has an unbounded
continuous component), we work on a data-dependent truncated domain
$\bar{\Y}_N$ and write $\bar{\Z}_N:=\X\times\bar{\Y}_N$. In the mixed case,
only the continuous outcome coordinates are truncated. The appendix
gives the explicit min--max box construction in~\eqref{eq:yn_improved} and shows
that it captures asymptotically all probability mass (\cref{lem:box_mass}).

Given a partition $\pi_N$ of $\bar{\Z}_N$, the estimator used by the tree is
the plug-in rule
\[
\fpart(z)
=
\begin{cases}
\displaystyle \frac{n_{XY}(\pi_N[z])}{n_X(\pi_N[z])\,\mu_Y(\pi_N[z]_Y)},
& z\in\bar{\Z}_N,\ n_X(\pi_N[z])>0,\\[0.8em]
0, & \text{otherwise.}
\end{cases}
\]
This is the empirical counterpart of~\eqref{eq:cond_exp_improved}; the
consistency analysis of this estimator is given in~\cref{sec:consistency}.

\subsubsection{Normalization}

For a fixed $x$, the map $y\mapsto\fpart(x,y)$ is piecewise constant over the sets
$\{A_Y: A\in\pi_N,\ x\in A_X\}$, and its normalizing factor depends on the
corresponding $X$-projections $A_X$. Unless these sets form a partition of
$\bar{\Y}_N$ with a common denominator, $\fpart(x,\cdot)$ need not integrate to one.
When a proper conditional density on $\bar{\Y}_N$ is required, we therefore use the normalized version
\begin{equation}\label{eq:density_normalized}
\fpartn(x,y)
=
\frac{\fpart(x,y)}
{\int_{\bar{\Y}_N}\fpart(x,y')\,d\mu_Y(y')},
\end{equation}
whenever the denominator is positive.
The unnormalized estimator $\fpart$ is analyzed first in~\cref{sec:consistency},
and \cref{cor:extrapolation_consistency} shows that this normalization step
preserves consistency.


The next section describes how to efficiently choose $\pi_N$ to optimize a conditional log-loss objective.

\subsection{Tree Construction Algorithm}\label{sec:tree_construction}

\begin{figure*}
    \centering
    \includegraphics[width=\columnwidth]{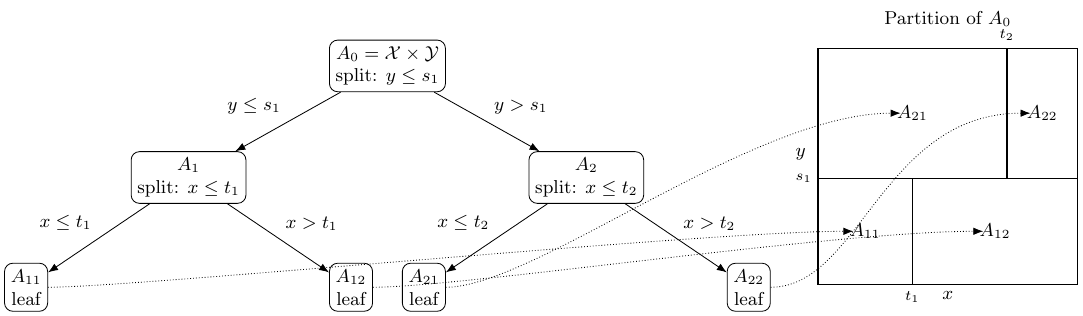}
    \caption{Illustration of a partition of the joint space $\Z=\X\times\Y$. Each leaf of the tree corresponds to a rectangular cell in the induced partition (right). For a fixed query $x=t$, the leaves whose $X$-projection contains $t$ form a histogram over $\Y$: the associated $Y$-intervals $A_Y$ are the bins, and the estimator is constant on each bin. In particular, for $x\le t_1$ the slice $x=t$ intersects two leaves, yielding a two-bin histogram defined by $A_{11}$ and $A_{12}$; for $t_1<x\le t_2$, it intersects the leaves $A_{21}$ and $A_{12}$. On each cell, the conditional density is estimated from empirical counts normalized by the $\Y$-volume $\mu_Y(A_Y)$.}
    \label{fig:tree_illustration}
\end{figure*}

We build a data-dependent partition $\pi_N$ of $\bar{\Z}_N=\X\times\bar{\Y}_N$
using a greedy and best-first tree procedure. Each leaf corresponds to a measurable product cell
\[
A = A_X \times A_Y,
\]
and the tree encodes a recursive refinement of such cells. Intuitively, splits
on $X$ refine regions of the covariate space, while splits on $Y$ refine the
outcome bins inside a fixed covariate region. This joint-space partition is
exactly what is needed by the estimator $\fpart$ defined above,
since it defines (for each $x$-region) a data-adaptive histogram on $Y$, as illustrated in~\cref{fig:tree_illustration}.

\subsubsection{Candidate splits and feasibility}

A candidate split acts on a single coordinate of $Z=(X, Y)$ and preserves the
product structure of the leaves:
\begin{align*}
A \;\longrightarrow\; \{A_l,A_r\},\qquad A = A_l \uplus A_r \\
A_l = A_{l,X}\times A_{l,Y},\;\; A_r = A_{r,X}\times A_{r,Y}.
\end{align*}
If we split along an $X$-coordinate, then $A_{l,Y}=A_{r,Y}=A_Y$ and $A_X$ is
partitioned into $A_{l,X}\uplus A_{r,X}$. If we split along a $Y$-coordinate,
then $A_{l,X}=A_{r,X}=A_X$ and $A_Y$ is partitioned into $A_{l,Y}\uplus A_{r,Y}$.

For continuous coordinates $z_\ell\in \mathbb{R}$, we use threshold splits
$z_\ell\le t$ vs.\ $z_\ell>t$. For categorical coordinates $z_\ell\in\Sigma$, we
use subset tests $z_\ell\in S$ vs.\ $z_\ell\notin S$ for
$\emptyset\subsetneq S\subsetneq\Sigma$. In the practical gain-based split
search, we only accept splits that keep both children populated and
well-defined for the density estimator, i.e.,
\begin{align*}
n_X(A_l)>0,\quad n_X(A_r)>0, \mu_Y(A_{l,Y})>0,\quad \mu_Y(A_{r,Y})>0.
\end{align*}
Clearly, when splitting on a $Y$-coordinate, $n_X(A_l)=n_X(A_r)=n_X(A)$.

Candidate gain-based splits producing a child with $n_X(A)=0$ or
$\mu_Y(A_Y)=0$ are discarded; if no admissible split exists for a leaf, it is
declared terminal. By definition, $\fpart$ uses the zero convention on cells
with $n_X(A)=0$, although such cells do not arise under these admissibility
constraints.

\subsubsection{Log-loss objective and split gain}

Given a partition $\pi$, recall that the piecewise-constant conditional density
estimate is $f_\pi(z)=\PXY(\pi[z])/\nu(\pi[z])$ with $\nu=\PX\otimes\mu_Y$.
We score a partition using the conditional negative log-likelihood
\begin{align}\label{eq:pop_loss_tree_improved}
\mathcal L(\pi)
&:= -\E_{\PXY}\big[\log f_\pi(Z)\big] \\
&= -\sum_{A\in\pi}\PXY(A)\log\!\left(\frac{\PXY(A)}{\PX(A_X)\,\mu_Y(A_Y)}\right),
\nonumber
\end{align}
with conventions $0\log 0:=0$.

Consider splitting a leaf $A\in\pi$ into $A_l,A_r$ and let
$\pi' := (\pi\setminus\{A\})\cup\{A_l,A_r\}$. The population gain is the log-loss
reduction
\[
G(\pi',\pi) := \mathcal L(\pi)-\mathcal L(\pi')\;\;\ge 0,
\]
and can be written as a Jensen gap (Appendix~\ref{app:gain_props}).

In practice, we maximize the empirical gain obtained by replacing $\PXY$ and $\PX$
with their empirical counterparts.
\begin{align}\label{eq:empirical_gain}
\widehat G(\pi', \pi )
&=
\frac{n_{XY}(A_l)}{N}\log\!\left(\frac{n_{XY}(A_l)}{n_X(A_l)\,\mu_Y(A_{l,Y})}\right) \nonumber
\quad+
\frac{n_{XY}(A_r)}{N}\log\!\left(\frac{n_{XY}(A_r)}{n_X(A_r)\,\mu_Y(A_{r,Y})}\right)
\quad\\&-
\frac{n_{XY}(A)}{N}\log\!\left(\frac{n_{XY}(A)}{n_X(A)\,\mu_Y(A_Y)}\right),
\end{align}
where any term with $n_{XY}(\cdot)=0$ contributes $0$.

\subsubsection{Best-first growth}

The tree is grown under a global split budget $k_N$ by repeatedly selecting the
leaf--split pair with the largest empirical gain. Concretely, for each current
leaf $A$ we search over admissible one-coordinate splits, record its best gain,
and maintain the leaves in a priority queue so that only the affected entries
need to be recomputed after a split.

\subsubsection{Efficient split search and complexity}

A computational advantage of \eqref{eq:empirical_gain} is that it depends only on simple count statistics for candidate children. For a continuous coordinate $z_\ell$ and a leaf $A$, candidate thresholds are the midpoints between consecutive distinct values observed in the leaf. After sorting the relevant values within each leaf, we scan thresholds in increasing order and update the required counts using prefix sums, enabling evaluation of all thresholds on that coordinate in linear time, plus sorting. \Cref{alg:split_continuous} in~\cref{app:algorithm_continuous} gives one concrete implementation using two sorted index lists: one for the covariate samples in $A_X$ (to update $n_X$ when splitting on $X$), and another for the joint samples in $A$ (to update $n_{XY}$ and to handle splits on either $X$ or $Y$). 

For categorical coordinates with alphabet $\Sigma$, we avoid enumerating all $2^{|\Sigma|}-2$ subset tests by computing per-category count statistics inside the leaf, sorting categories by the score, and scanning the $|\Sigma|-1$ prefix
thresholds. \Cref{app:cat_subset_split} shows that this scan attains the best
subset split for a leaf and coordinate.

Let $d_Z$ be the number of splittable coordinates in $Z=(X, Y)$. With per-leaf sorting, evaluating all candidate splits at a leaf with $n_X(A)$ covariate points costs $O(d_Z\;n_X(A) \log n_X(A))$ in a straightforward implementation. \\[0.2em]
For categorical coordinates, since alphabet sizes are fixed, the sort-and-scan subset search adds only $O(n_X(A))$ work per leaf, which is smaller than the split search cost over continuous coordinates. Summed over the best-first growth, this leads to an overall time complexity on the order of $O(d_Z\, N\log N)$ with global presorting and maintained sorted indices, or $O(d_Z\, N\log^2 N)$ when sorting within each node for balanced trees, matching the scaling observed in practice for CART trees.

\subsection{\pf\ (bagging ensemble)}
To improve predictive stability and log-loss, we also consider an ensemble variant, \pf, obtained by bagging \pts. We fit $B$ trees independently, where tree $b$ is trained on a bootstrap sample $\D^{(b)}$ (and optionally using random feature subsampling at each split, as in Random Forests~\cite{breiman2001random}). Given per-tree conditional density estimates $\{\fpart^{(b)}\}_{b=1}^B$, the forest predictor is the averaged density
\[
\fpart^{\mathrm{F}}(x,y) := \frac{1}{B}\sum_{b=1}^B \fpart^{(b)}(x,y),
\]
which is then normalized as in~\cref{eq:density_normalized} to obtain $\fpartn^{\mathrm{F}}(x,y)$.

\section{Consistency}\label{sec:consistency}

In this section, we give sufficient conditions for $L^1(\nu)$-consistency of the
piecewise-constant estimator in terms of the induced partition sequence. Hence
the result applies to any data-driven tree construction whose partitions
satisfy the corresponding complexity and shrinkage conditions.

The consistency of data-driven partitions for joint density estimation was studied by \cite{lugosi1996consistency}. We adapt those results to conditional density estimation. To state the assumptions, we introduce the required complexity measures. The maximum cell count of a family of partitions $\mathcal{A}$ is
\begin{equation*}
    m(\mathcal A) = \sup_{\pi \in \mathcal{A}}|\pi|
\end{equation*}
and the maximum number of distinct $\X$-projections induced by a partition family is
\begin{equation}
    m_X(\mathcal A_N)
:=\sup_{\pi\in\mathcal A_N}\big|\{A_X:\ A\in\pi\}\big|.
\end{equation}
Denote by $\Delta(\mathcal{A}, \D)$ the number of distinct partitions of the finite set $\D$ induced by partitions in $\mathcal{A}$. The growth function $\Delta^*_N(\mathcal{A})$ is defined by
\begin{equation*}
\Delta^*_N(\mathcal{A}) := \max\limits_{\D \in \mathcal Z^N} \Delta(\mathcal{A}, \D).
\end{equation*}
We assume that the class of $X$-projections of leaves has finite VC dimension.
In our implementation, it consists of axis-aligned rectangles in $\mathbb R^{\dxc}$ times categorical cylinder sets over finite alphabets. This class has a finite VC dimension by standard results on VC bounds for rectangles and finite product classes \citep{van2009note}.
We also require that the partitioning rule only returns leaves with positive $\hPX$ and $\mu_Y$ mass, which is enforced by the minimum-population and positive-volume hyperparameters used when accepting splits in the tree construction.

\begin{assumption}[Admissible leaves]\label{ass:admissible_leaves}
The partitioning rule returns only leaves $A\in\pi_N$ satisfying
$\hPX(A_X)>0$ and $\mu_Y(A_Y)>0$, almost surely. In the tree implementation,
this is enforced by the minimum-population and positive-volume hyperparameters
used when accepting splits.
\end{assumption}

\Cref{the:consistency_1} gives the main sufficient conditions for any
data-driven partition sequence of $\Z$ satisfying these assumptions to be
$\lnu$-consistent. The proof is deferred to Appendix~\ref{app:consistency}.

To keep the theorem statement self-contained, let $\bar{\Y}_N\subseteq\Y$
denote a measurable truncation with $\mu_Y(\bar{\Y}_N)<\infty$ and
$\PXY(Y\notin\bar{\Y}_N)\to 0$ almost surely, and write
$\bar{\Z}_N:=\X\times\bar{\Y}_N$. For a cell $A$, let
\[
\diamh(A):=\diamc(A)+\sum_k q_k(A),
\]
where $\diamc(A)$ is the diameter of the continuous coordinates after the
bounded monotone transform described in Appendix~\ref{app:mixed_type_setup}, and
$q_k(A)$ measures how many categories remain unresolved along categorical
coordinate $k$.

\begin{restatable}{theorem}{ConsistencyOne}
\label{the:consistency_1}
    Let $\D_N = \{(X_i, Y_i)\}_{i=1}^N = \{\Z_i\}_{i=1}^N$ be a set of observations
    belonging to $\Z := \X \times \Y$ with joint distribution $\PXY \ll \nu$.
    Let $\Pi=\{\pi_1, \dots, \pi_N\}$ be a
    partitioning scheme for $\bar{\Z}_N$, and let $\A_N$ be the
    collection of partitions associated with the rule $\pi_N$. Denote by
    $f(X, Y)$ the conditional density of $Y$ given $X$ and the estimate as
    in~\cref{eq:density_formula_improved}. Assume that the class $\mathcal C_X$
    of possible $X$-projections of leaves has finite VC dimension and that
    \cref{ass:admissible_leaves} holds. Assume further:
    \begin{enumerate}
    \item $N^{-1}m(\A_N) \tendsto 0$
    \item $N^{-1} \log \Delta^*_N(\A_N)  \tendsto 0$   
    \item There exists a sequence $\gamma_N \xrightarrow{N \to \infty }{0}$
    such that
    $\PXY(\{z \in \bar{\Z}_N : \diamh(\pi_N[z]) > \gamma_N\})
    \xrightarrow{N \to \infty }{0}$ almost surely.
    \item $m_X(\A_N)\sqrt{\frac{\log N}{N}}\to 0.$

    \end{enumerate}
    Then
    \[
    \| \fpart - f\|_{L^1(\nu)} \longrightarrow 0 \quad\text{as } N\to\infty,
    \]
    almost surely.
\end{restatable}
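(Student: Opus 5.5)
The plan is the usual estimation/approximation decomposition of Lugosi--Nobel, adapted to the conditional setting. Let $\bar f_{\pi_N}:=\E_\nu[f\mid\sigma(\pi_N)]$ on $\bZN$ denote the population piecewise-constant approximation \eqref{eq:cond_exp_improved}, with the partition $\pi_N$ held fixed. Then
\[
\|\fpart-f\|_{\lnu}\le \|f\1_{\Z\setminus\bZN}\|_{\lnu}+\|\fpart-\bar f_{\pi_N}\|_{\lnuz}+\|\bar f_{\pi_N}-f\|_{\lnuz}.
\]
The first term equals $\PXY(Y\notin\bYN)$, which vanishes a.s. by the truncation hypothesis.

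\textbf{Approximation term.} I would argue as in the proof of a martingale/Lebesgue-differentiation result, via density of continuous functions. Fix $\epsilon>0$ and choose $g$ that is bounded, compactly supported in the continuous coordinates and uniformly continuous with respect to $\diamh$, with $\|f-g\|_{\lnu}<\epsilon$. Conditional expectation is an $L^1$ contraction, so
\[
\|\bar f_{\pi_N}-f\|\le 2\epsilon+\|\E_\nu[g\mid\sigma(\pi_N)]-g\|_{\lnuz}.
\]
On cells with $\diamh\le\gamma_N$ the oscillation of $g$ is at most $\omega_g(\gamma_N)\to0$. For the categorical part, $q_k(A)<1$ forces a single category, so $g$ is continuous there. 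The subtle point is that condition 3 is stated under $\PXY$, not $\nu$. I would therefore control the $\nu$-mass of bad cells via $\int_{\text{bad}}|\bar f_{\pi_N}-f|\,d\nu\le 2\int_{\text{bad}} f\,d\nu+\ldots$, using that the bad set is a union of cells and conditional expectation preserves cell integrals. The integral over bad cells of $f$ is then exactly their $\PXY$-mass, so it tends to zero. The integral of $|g|$ over bad cells is bounded by the integral of $f$ plus $\epsilon$. This requires $\nu(\bZN)$ to be handled carefully, since $g$ has bounded support.

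\textbf{Estimation term.} On each cell, $\fpart-\bar f_{\pi_N}=\frac{\hPXY(A)}{\hPX(A_X)\mu_Y(A_Y)}-\frac{\PXY(A)}{\PX(A_X)\mu_Y(A_Y)}$, and the cell has $\nu$-measure $\PX(A_X)\mu_Y(A_Y)$. Multiplying out,
\[
\|\fpart-\bar f_{\pi_N}\|\le\sum_{A}|\hPXY(A)-\PXY(A)|+\sum_A \hPXY(A)\Big|\frac{\PX(A_X)}{\hPX(A_X)}-1\Big|.
\]
The first sum is the Lugosi--Nobel quantity $\sup_{\pi\in\A_N}\sum_{A\in\pi}|\hPXY(A)-\PXY(A)|$. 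Their theorem (Lemma 1 of \cite{lugosi1996consistency}) shows this tends to zero a.s. under conditions 1 and 2, via a Vapnik--Chervonenkis-type inequality of the form $4\Delta^*_{2N}(\A_N)2^{m(\A_N)}e^{-N\epsilon^2/32}$ and Borel--Cantelli.

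For the second sum, group cells by their distinct $X$-projections. For a given $A_X$, the cells sharing it satisfy $\sum \hPXY(A)\le\hPX(A_X)$, so the sum is at most $\sum_{A_X}|\PX(A_X)-\hPX(A_X)|\le m_X(\A_N)\sup_{C\in\mathcal C_X}|\hPX(C)-\PX(C)|$. Admissibility (\cref{ass:admissible_leaves}) makes the ratios well-defined. Because $\mathcal C_X$ has finite VC dimension, the VC inequality with Borel--Cantelli gives $\sup_C|\hPX(C)-\PX(C)|=O(\sqrt{\log N/N})$ a.s., and condition 4 then kills this term.

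\textbf{Main obstacle.} I expect the hardest step to be the approximation term. Condition 3 controls shrinkage only in $\PXY$-measure, on a data-dependent truncated domain whose $\nu$-volume may grow. Transferring this to $L^1(\nu)$ convergence of the conditional expectations uniformly in the random partition needs the careful bad-cell argument above, together with the observation that the data-dependence of $\pi_N$ is harmless because the bound holds for every fixed partition.
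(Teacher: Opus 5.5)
Your proposal is correct and follows the paper's proof essentially step for step: the same truncation/estimation/approximation decomposition, and the same grouping of the $\hPX$-error by distinct $X$-projections (via $\sum_{A_X=U}\hPXY(A)\le\hPX(U)$, the VC bound of \cref{lem:vc_dev_PX_as_only} and condition~4); it also uses Lugosi--Nobel for $\sum_{A}|\hPXY(A)-\PXY(A)|$, and converts condition~3 into an $L^1(\nu)$ bound in the same way, since the bad set is a union of cells on which cell averages preserve integrals, while the only variation is that you approximate $f$ by a compactly supported uniformly continuous $g$ and use its modulus of continuity on small cells (with finite $\nu$-mass because small cells meeting $\operatorname{supp} g$ stay in a fixed bounded set), where the paper uses a simple function with $\nu$-null boundaries and the boundary neighbourhood $U(\eta)$. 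Two small fixes: $q_k(A)<1$ does not force a single category---you need $q_k(A)\le\gamma_N<\min_k 1/(|\Sigma_k|-1)$, which holds for large $N$---and since the cells of $\pi_N$ only cover the data-dependent domain $\bZN$, you should, as the paper does, apply Lugosi--Nobel to the extension $\pi_N\cup\{\Z\setminus\bZN\}$ and check that conditions~1--2 survive.
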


\begin{proof} See~\cref{app:consistency} for the proof.\end{proof}

\section{Experiments}\label{sec:experiments}

We conduct experiments to answer the following questions:
\begin{compactenum}[(1)]
    \item How does the proposed algorithm compare to the traditional CART decision tree and its probabilistic variants?
    \item Can the proposed tree handle both homoscedastic and heteroscedastic noise?
    \item How does the proposed tree algorithm behave under correlated, noisy feature duplication?
\end{compactenum}
Our experiments assess the performance of the proposed greedy gain-based splitting strategy on both classification and regression tasks. We focus on comparisons against the CART decision tree algorithm~\cite{breiman2017classification}, implemented in \scikit~\cite{scikit-learn}. Additionally, we compare the results against Random Forest implemented in~\scikit.

We use five-fold cross-validation, with a nested train-validation split within each fold for hyperparameter tuning. Hyperparameter tuning is performed using Optuna~\cite{akiba2019optuna}, with a budget of 200 trials and a time limit of 20 minutes per model. For CDTree, hyperparameters are not tuned, following the recommendations of the original implementation. Model selection is based on log-loss evaluated on the validation data. Additional details regarding hyperparameters and experimental settings are provided in~\Cref{app:experiments}. All experiments were conducted using our implementation of \pt\ and \pf. 
All experiments were conducted on an Apple MacBook equipped with an Apple M4 chip.

\paragraph{Classification}

\Cref{tab:ClassifLogLoss} reports the log-loss obtained by \pt\ and \pf\ in comparison with their respective baselines. Among single tree methods, \pt\ outperforms CART trees on the majority of datasets (8 out of 9). Among bagging methods, \pf\ achieves better probabilistic classification performance than Random Forest, a bagging-based ensemble of CART trees, on most datasets. 
%
%
\begin{table}
\centering
\caption{Log-loss results over 5-fold cross-validation for single tree and bagging tree methods on classification and regression tasks (mean ± std). Lower is better. Best for single tree and bagging in bold. The last row represents the average ranking of each method in its group. \underline{\textit{Underline}} indicates statistical ties according to a paired t-test at the 5\% significance level (run separately within single tree and bagging groups)}\label{tab:results}
\footnotesize{
  \begin{subtable}{0.8\linewidth}
     \caption{Classification tasks}\label{tab:ClassifLogLoss}
     \begin{adjustbox}{width=\linewidth}
        \begin{tabular}{lcccc}
\toprule
 & \multicolumn{2}{c}{Single tree} & \multicolumn{2}{c}{Bagging} \\
\cmidrule(lr){2-3} \cmidrule(lr){4-5}
 & Partition Tree & CART (Logloss) & Partition Forest & RF (Logloss) \\
\midrule
iris & \underline{\textit{0.46 $\pm$ 0.38}} & \textbf{0.46 $\pm$ 0.46} & \underline{\textit{0.50 $\pm$ 0.43}} & \textbf{0.15 $\pm$ 0.08} \\
breast & \textbf{0.64 $\pm$ 0.05} & \underline{\textit{0.70 $\pm$ 0.26}} & \underline{\textit{0.58 $\pm$ 0.02}} & \textbf{0.56 $\pm$ 0.04} \\
wine & \textbf{1.02 $\pm$ 0.06} & \underline{\textit{1.47 $\pm$ 0.50}} & \textbf{0.86 $\pm$ 0.04} & \underline{\textit{0.88 $\pm$ 0.11}} \\
digits & \textbf{0.45 $\pm$ 0.04} & 1.48 $\pm$ 0.22 & \textbf{0.29 $\pm$ 0.01} & \underline{\textit{0.29 $\pm$ 0.01}} \\
spam & \textbf{0.22 $\pm$ 0.04} & 0.29 $\pm$ 0.06 & \underline{\textit{0.17 $\pm$ 0.03}} & \textbf{0.16 $\pm$ 0.02} \\
support2 & \underline{\textit{0.28 $\pm$ 0.02}} & \textbf{0.28 $\pm$ 0.02} & \textbf{0.24 $\pm$ 0.02} & \underline{\textit{0.25 $\pm$ 0.01}} \\
letter & \textbf{0.62 $\pm$ 0.17} & 1.42 $\pm$ 0.06 & 0.49 $\pm$ 0.01 & \textbf{0.28 $\pm$ 0.01} \\
bank & \textbf{0.22 $\pm$ 0.01} & 0.22 $\pm$ 0.01 & \textbf{0.20 $\pm$ 0.01} & \underline{\textit{0.20 $\pm$ 0.01}} \\
adult & 0.93 $\pm$ 0.01 & \textbf{0.92 $\pm$ 0.01} & \textbf{0.90 $\pm$ 0.01} & \underline{\textit{0.90 $\pm$ 0.01}} \\
\midrule
Avg. rank & 1.33 & 1.67 & 1.44 & 1.56 \\
\bottomrule
\end{tabular}
     \end{adjustbox}
  \end{subtable}

  \medskip

  \begin{subtable}{0.8\linewidth}
     \caption{Regression tasks}\label{tab:RegLogLoss}
     \begin{adjustbox}{width=\linewidth}
         \begin{tabular}{lcccccc}
\toprule
 & \multicolumn{4}{c}{Single tree} & \multicolumn{2}{c}{Bagging} \\
\cmidrule(lr){2-5} \cmidrule(lr){6-7}
 & Partition Tree & CART & CADET & CDTree & Partition Forest & RF \\
\midrule
diabetes & \underline{\textit{5.65 $\pm$ 0.11}} & 17.93 $\pm$ 2.49 & \underline{\textit{5.59 $\pm$ 0.14}} & \textbf{5.58 $\pm$ 0.06} & \textbf{5.49 $\pm$ 0.10} & 14.21 $\pm$ 2.43 \\
boston & \underline{\textit{2.90 $\pm$ 0.17}} & 8.39 $\pm$ 3.33 & \underline{\textit{2.89 $\pm$ 0.18}} & \textbf{2.88 $\pm$ 0.14} & \textbf{2.53 $\pm$ 0.13} & 4.60 $\pm$ 1.08 \\
energy & 2.36 $\pm$ 0.18 & \underline{\textit{1.67 $\pm$ 0.04}} & \underline{\textit{10.81 $\pm$ 18.62}} & \textbf{1.55 $\pm$ 0.16} & 2.06 $\pm$ 0.07 & \textbf{1.55 $\pm$ 0.03} \\
concrete & \underline{\textit{3.70 $\pm$ 0.06}} & 7.45 $\pm$ 0.90 & \textbf{3.52 $\pm$ 0.17} & \underline{\textit{3.66 $\pm$ 0.06}} & \textbf{3.36 $\pm$ 0.03} & 3.86 $\pm$ 0.40 \\
kin8nm & \underline{\textit{-0.18 $\pm$ 0.02}} & 9.42 $\pm$ 0.15 & \textbf{-0.22 $\pm$ 0.04} & \underline{\textit{-0.18 $\pm$ 0.01}} & \textbf{-0.45 $\pm$ 0.02} & 3.30 $\pm$ 0.12 \\
air & \textbf{2.12 $\pm$ 0.11} & 13.49 $\pm$ 1.00 & \underline{\textit{13.78 $\pm$ 17.54}} & 6.42 $\pm$ 0.00 & \textbf{2.02 $\pm$ 0.10} & 11.35 $\pm$ 0.88 \\
power & \textbf{2.87 $\pm$ 0.02} & 3.13 $\pm$ 0.10 & \underline{\textit{2.90 $\pm$ 0.10}} & 2.93 $\pm$ 0.03 & \textbf{2.60 $\pm$ 0.02} & \underline{\textit{2.63 $\pm$ 0.06}} \\
naval & -4.38 $\pm$ 0.15 & 4.69 $\pm$ 0.22 & \textbf{-4.78 $\pm$ 0.08} & -4.29 $\pm$ 0.03 & -4.72 $\pm$ 0.01 & \textbf{-4.98 $\pm$ 0.01} \\
california & 0.69 $\pm$ 0.02 & 7.07 $\pm$ 0.36 & 0.88 $\pm$ 0.16 & \textbf{0.58 $\pm$ 0.02} & \textbf{0.42 $\pm$ 0.02} & 3.11 $\pm$ 0.14 \\
protein & 2.22 $\pm$ 0.02 & 12.31 $\pm$ 0.20 & 3.13 $\pm$ 0.22 & \textbf{2.18 $\pm$ 0.02} & \textbf{2.10 $\pm$ 0.02} & 6.19 $\pm$ 0.04 \\
\midrule
Avg. rank & 2.30 & 3.70 & 2.30 & 1.70 & 1.20 & 1.80 \\
\bottomrule
\end{tabular}
     \end{adjustbox}
   \end{subtable}
}
\end{table}
\paragraph{Regression Task}
%
%
We compare \pt\ against probabilistic tree regression baselines: CDTree~\cite{yang2024conditional}, and CADET~\citep{cousins2019cadet}. 
Since CART decision trees do not natively support probabilistic regression, we adopt the residual-based approach implemented in \texttt{skpro}~\cite{gressmann2018probabilistic} for CART and Random Forest. It assumes normally distributed residuals and estimates predictive variance using an inner train-test split.

The log-loss results are summarized in~\Cref{tab:RegLogLoss}. Among single tree methods, CDTree achieves the best average ranking. However, its high computational complexity limits scalability and it can make its integration into bagging frameworks impractical, as one can see in the runtime comparison in~\Cref{fig:runtime} in~\Cref{app:runtime}. 
Comparing \pt\ with CADET, our method outperforms CADET on Air, Power, California, and Protein, all of which are among the larger datasets in the benchmark suite. This pattern suggests that \pt\ may benefit more from larger sample sizes than the parametric baseline.
For bagging methods, \pf\ outperforms Random Forests on most datasets (8 out of 10). These results indicate that \pf\ is substantially better suited for probabilistic regression tasks than its CART-based counterpart. Additional results for the root mean squared error~(RMSE) and predictive standard deviations are provided in~\Cref{tb:classif_extended_results} in~\Cref{app:experiments}.



\paragraph{Robustness against noise and redundant features}
To evaluate robustness to correlated and redundant features, we augment the feature matrix $X \in \mathbb{R}^{N \times \dx}$ with $k$ additional noisy columns. Each added feature is generated by copying a randomly selected original column $X_{c_i}$ and adding Gaussian noise $\epsilon \sim \mathcal{N}(0, \sigma_i^2)$. The noise scale is set to the mean absolute value of the selected column, $\sigma_i = \frac{1}{N} \sum_j |X_{j, c_i}|$, which ensures that the perturbation is proportional to the feature's magnitude while preserving substantial correlation with the original predictor. 
%
%
We evaluated robustness on the Concrete Compressive Strength dataset using five-fold cross-validation. The results are summarized in~\cref{fig:noisy_features}. CDTree maintains a stable negative log-likelihood as the number of features increases, indicating robustness to 
correlated feature duplication. In contrast, \pt\ showed a gradual degradation in the negative log-likelihood, although its performance variance across folds remains lower than that of CADET. This suggests that while \pt\ is less robust in expectation, it yields more consistent predictions under feature corruption.


\paragraph{Homoscedastic and heteroscedastic noise}
We study robustness to label noise by constructing semi-synthetic datasets in which we perturb the target values. For the homoscedastic setting, we add Gaussian noise with standard deviation $\sigma(\lambda) = \lambda\cdot \frac{1}{N} \sum_i |y_i|$, with $\lambda \in \{0.1, 0.5, 1.0, 1.5\}$. For the heteroscedastic case, we add sample-dependent Gaussian noise with $\sigma_i(\lambda) = \lambda |y_i|$, using the same values of $\lambda$. \Cref{fig:noisehomoscedastic,fig:noiseheteroscedastic} shows that \pt\ degrades with noise at a rate comparable to CDTree in both settings, while CADET exhibits higher variability across folds.
\begin{figure}
    \centering
    \begin{subfigure}[c]{0.31\linewidth}
        \includegraphics[width=\linewidth]{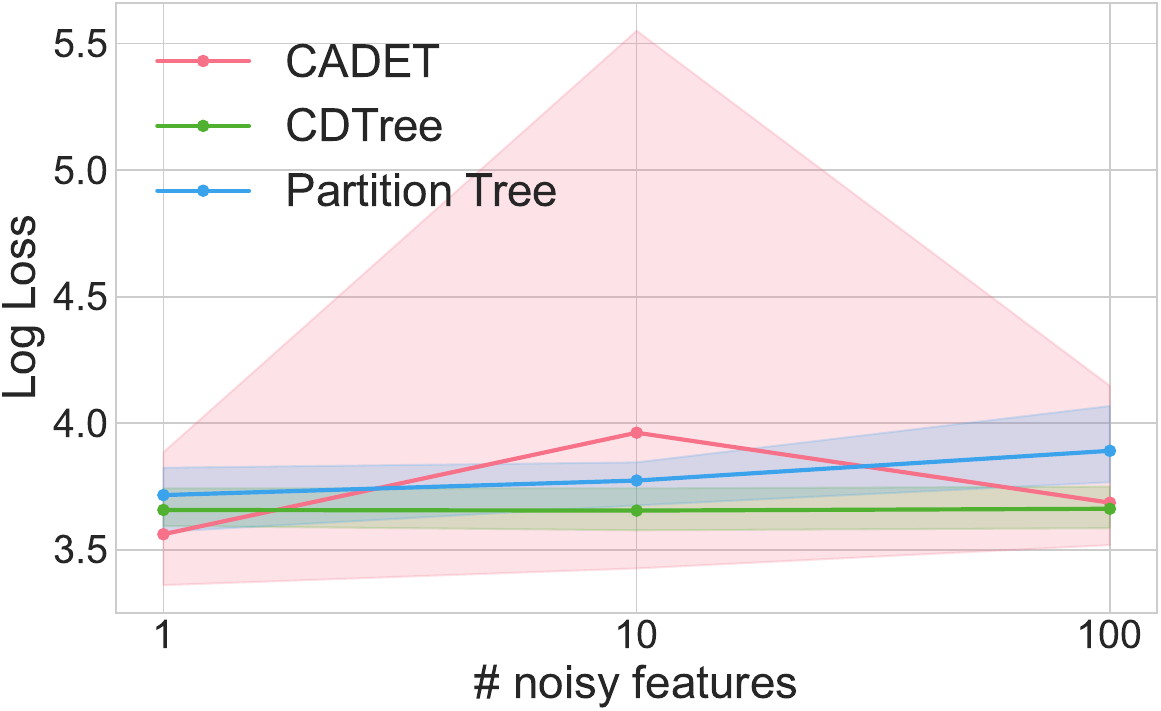}
        \caption{Noise free}\label{fig:noisy_features}
    \end{subfigure}
    \begin{subfigure}[c]{0.31\linewidth}
        \includegraphics[width=\linewidth]{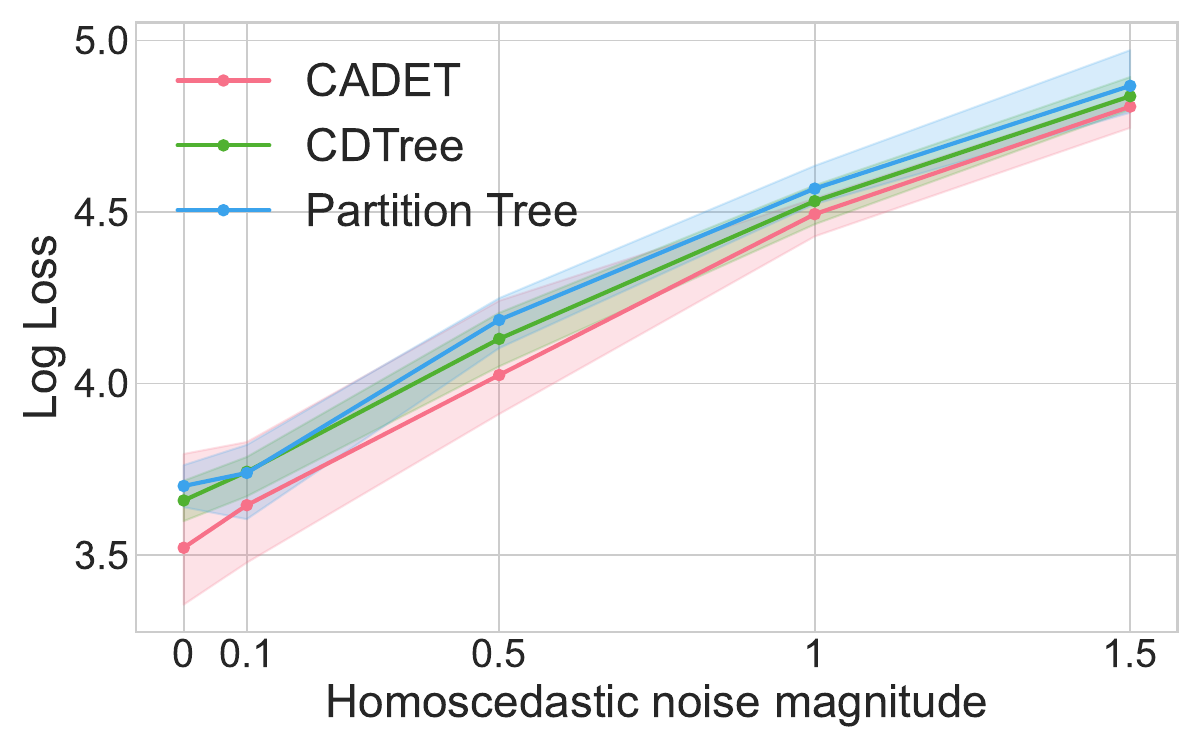}
        \caption{Homoscedastic noise}\label{fig:noisehomoscedastic}
    \end{subfigure}
    \begin{subfigure}[c]{0.31\linewidth}
        \includegraphics[width=\linewidth]{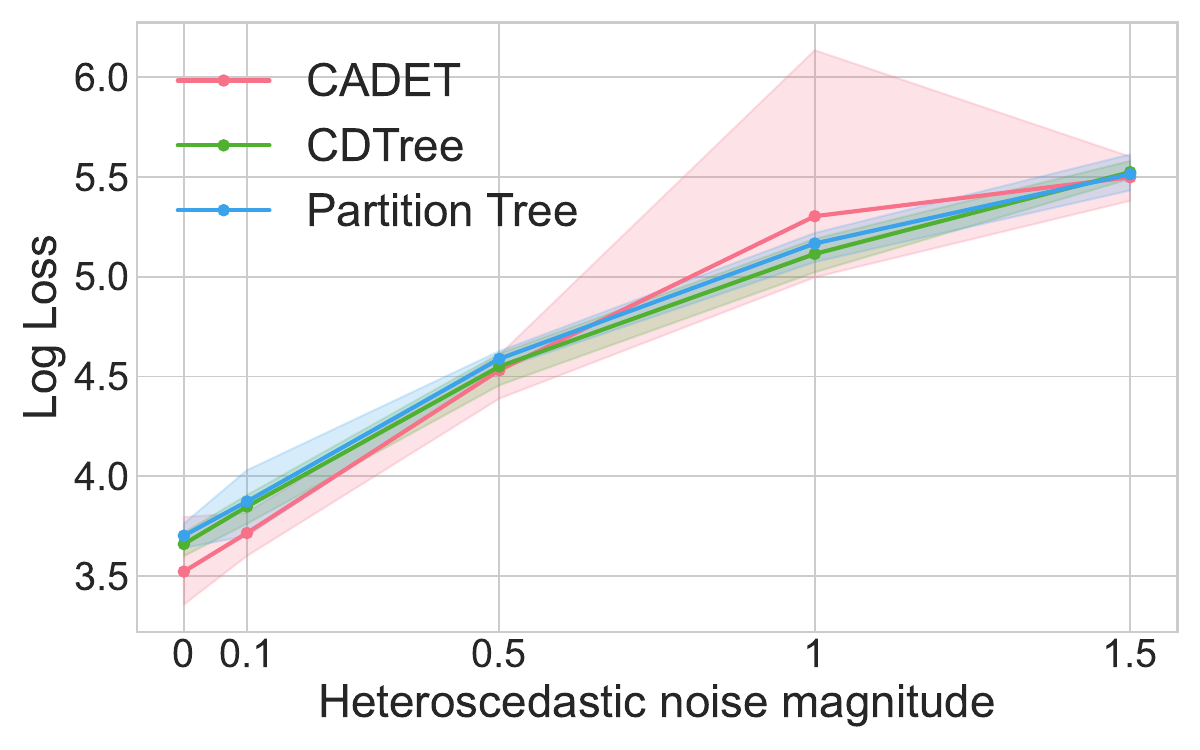}
        \caption{Heteroscedastic noise}\label{fig:noiseheteroscedastic}
    \end{subfigure}
    \caption{Performance of probabilistic tree models under different types of noise. Shaded bands indicate the minimum and maximum negative log-likelihood across five cross-validation folds.}\label{fig:noise}
\end{figure}


\section{Conclusion}
We introduced \emph{\pt}, a decision-tree framework for conditional density estimation over general outcome spaces, including continuous, categorical, and mixed-type targets. The method is grounded in a measure-theoretic formulation in which conditional densities are defined as a Radon-Nikodym derivative with respect to the dominating measure $\PP_X\otimes\mu_Y$. This yields a unified piecewise-constant estimator that naturally covers both classification and regression within a single framework. 

\pt\ is learned via a greedy, best-first procedure that directly optimizes a conditional log-loss objective on data-adaptive partitions of the joint covariate-outcome space. We also isolate general sufficient conditions for $L^1(\nu)$-consistency under standard complexity and shrinkage assumptions on the induced partitions. A bagging extension, \emph{\pf}, averages conditional densities across trees and improves predictive stability and log-loss performance. 

Empirically, the proposed models demonstrate competitive probabilistic performance across a diverse set of classification and regression benchmarks. \pf\ typically achieves lower log-loss than CART-based trees and often improves over Random Forests, while providing accurate predictive densities and competitive point predictions in regression tasks. Experiments also indicate robustness to heteroscedastic noise and characterize behavior under correlated feature duplication.

\subsection*{Limitations and further work}

Several directions for future work remain open. From an algorithmic perspective, integrating \pf\ into boosting frameworks and developing pruning strategies tailored to conditional density objectives may further improve computational efficiency and predictive performance. More broadly, the ability of \pt\ to handle heterogeneous and mixed-type outcomes makes it a natural candidate for integration into causal inference pipelines, where flexible conditional density estimation plays a central role.

\bibliographystyle{unsrtnat}
\bibliography{references}


\appendix
\clearpage
\crefalias{section}{appendix}

\onecolumn
\newpage

\section{Estimator and Objective}\label{app:estimator_details}\label{app:gain_props}

\subsection{Mixed-Type Covariates and Outcomes}\label{app:mixed_type_setup}
We allow $\X$ and $\Y$ to contain a mixture of coordinate types.
\begin{align*}
X&=(X^{(c)},X^{(k)})\in \Xc\times\Xk=: \X
\\
Y&=(Y^{(c)},Y^{(k)})\in \Yc\times\Yk=: \Y
\end{align*}
where,
\begin{compactenum}[(a)]
\item $\Xc\subseteq\mathbb R^{\dxc}$ and $\Yc\subseteq\mathbb R^{\dyc}$ are continuous coordinates;
\item $\Xk=\prod_{j=1}^{\dxk}\Sigma_{x,j}$ and $\Yk=\prod_{j=1}^{\dyk}\Sigma_{y,j}$ are categorical coordinates with alphabets $\Sigma_{x,j},\Sigma_{y,j}$ (assumed finite).
\end{compactenum}

Cells are factorized coordinate-wise, with internal splits for continuous coordinates, and subset splits for categorical coordinates.\\
We introduce some notions that will be important in the consistency theorem.
Let $\mathcal C_Z$ index the continuous coordinates of $Z=(X,Y)$. For each
continuous coordinate $r\in\mathcal C_Z$, fix a continuous strictly
bounded monotone transform $\psi_r:\mathcal Z_r\to(0,1)$. Define
\[
\dcont(z,z'):=\sum_{r\in\mathcal C_Z}|\psi_r(z_r)-\psi_r(z'_r)|,
\qquad
\diamc(A):=\sup_{z,z'\in A}\dcont(z,z').
\]
For a categorical coordinate $k$, let $S_k(A)\subseteq\Sigma_k$ be the set of
categories allowed by the cell $A$ along that coordinate and define
\[
q_k(A):=
\begin{cases}
\frac{|S_k(A)|-1}{|\Sigma_k|-1}, & |\Sigma_k|>1,\\
0, & |\Sigma_k|\le 1.
\end{cases}
\]
The diameter used in the shrinkage condition is
\[
\diamh(A) = \diamc(A) + \sum_{k} q_k(A).
\]
It is finite even for unbounded continuous coordinates and dominates the
product metric diameter used in the proof.

\subsubsection{Truncation of unbounded outcome spaces}\label{app:truncation_setup}

If $\mu_Y(\Y)=\infty$ (e.g., $\Y$ contains an unbounded continuous component), then $\nu(A)=\PX(A_X)\mu_Y(A_Y)$ may be infinite unless $A_Y$ has finite $\mu_Y$-measure. To ensure well-defined denominators while still covering most of the probability mass, we restrict the estimator to a data-dependent truncated outcome domain $\bar{\Y}_N$ with $\mu_Y(\bar{\Y}_N)<\infty$, and define
$\bar{\Z}_N:=\X\times\bar{\Y}_N$.

For $N$ samples $\{y_i^{(c)}\}_{i=1}^N$ in the continuous part $\Yc\subseteq\mathbb{R}^{\dyc}$, define the axis-aligned box
\begin{align}
\label{eq:yn_improved}
\bar{\Y}_N^{(c)}
:=
\prod_{j=1}^{\dyc}
\big[\,\underline y_{N,j}-\delta_N,\ \overline y_{N,j}+\delta_N\,\big],
\\
\underline y_{N,j}:=\min_{1\le i\le N} y_{i,j},\ \
\overline y_{N,j}:=\max_{1\le i\le N} y_{i,j}, \nonumber
\end{align}
and set $\bar{\Y}_N := \bar{\Y}_N^{(c)}\times\Y^{(k)}$ for the mixed case. The
padding $\delta_N>0$ (typically $\delta_N\downarrow 0$) avoids boundary effects
and ensures $\mu_Y(\bar{\Y}_N)<\infty$.

Given a data-dependent partition $\pi_N$ of $\bar{\Z}_N$, our estimator is
\begin{equation}\label{eq:density_formula_improved}
\fpart(z) =
\begin{cases}
\displaystyle
\frac{n_{XY}(\pi_N[z])}{n_X(\pi_N[z])\,\mu_Y(\pi_N[z]_Y)},
& z\in\bar{\Z}_N,\ n_X(\pi_N[z])>0,\\[0.9em]
0,& \text{otherwise.}
\end{cases}
\end{equation}
We assume that $\delta_N\downarrow 0$ and that the truncation captures asymptotically all the probability mass, \ie\
\[
\PXY (Y \notin \bar{\Y}_N) \longrightarrow 0 \quad\text{almost surely as } N\to\infty.
\]
We assume a deterministic boundary convention (\eg\ half-open intervals) so that partition cells form a disjoint cover up to $\mu_Y$-null cells.

The following lemma is used to ensure consistency between the tree algorithm and the truncation of $\Y$.

\begin{lemma}[Mass capture of the empirical min--max box]
\label{lem:box_mass} Let $(\Omega, \mathcal{A}, \PP)$ be a probability triple.
Let $\omega \in \Omega$, $Y\in \Y$, with $\Y := \mathbb R^{\dyc}$ have distribution $\PP_Y$ and let
$Y_1,Y_2,\dots$ be i.i.d.\ copies of $Y$.
Fix any deterministic padding sequence $\delta_N\ge 0$.
Define for each coordinate $j$:
\[
\underline y_{N,j}:=\min_{1\le i\le N} (Y_i)_j,\qquad
\overline y_{N,j}:=\max_{1\le i\le N} (Y_i)_j,
\]
and the random axis-aligned box
\[
\bYN := \prod_{j=1}^{\dyc}\big[\underline y_{N,j}-\delta_N,\ \overline y_{N,j}+\delta_N\big].
\]

Then
\[
\PP_Y(\bYN^c)\ \xrightarrow{N\to\infty}{}\ 0
\qquad\text{almost surely.}
\]
\end{lemma}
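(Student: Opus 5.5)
The plan is to reduce the statement to a one-dimensional claim about each coordinate, and then use the Glivenko--Cantelli theorem, or more elementary order-statistic facts.

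\textbf{Step 1: union bound over coordinates.} Since $\delta_N\ge 0$, the box $\bYN$ contains the unpadded box $\prod_j[\underline y_{N,j},\overline y_{N,j}]$. Its complement is contained in
\[
\bigcup_{j=1}^{\dyc}\big\{y: y_j<\underline y_{N,j}\big\}\cup\big\{y: y_j>\overline y_{N,j}\big\}.
\]
Writing $F_j$ for the CDF of the marginal $(Y)_j$, this gives
\[
\PP_Y(\bYN^c)\le\sum_{j=1}^{\dyc}\Big(\PP\big((Y)_j<\underline y_{N,j}\big)+\PP\big((Y)_j>\overline y_{N,j}\big)\Big),
\]
where the probabilities are taken over an independent copy $Y$ with the sample held fixed. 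It therefore suffices to show, for each $j$, that $F_j(\underline y_{N,j}^-)\to 0$ and $1-F_j(\overline y_{N,j})\to 0$ almost surely. A finite union of null sets is null, so the coordinatewise statements combine.

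\textbf{Step 2: the one-dimensional tail statement.} I treat the upper tail; the lower tail is symmetric. Set $s:=\operatorname{ess\,sup}(Y)_j\in(-\infty,\infty]$.

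\emph{Case $s=\infty$.} Fix $M$. Since $\PP((Y)_j>M)>0$, independence and the second Borel--Cantelli lemma give that almost surely some $Y_i$ has $(Y_i)_j>M$. The running maximum is nondecreasing, so $\overline y_{N,j}\to\infty$ almost surely. Intersecting over $M\in\mathbb N$, we get $1-F_j(\overline y_{N,j})\to 0$.

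\emph{Case $s<\infty$.} For each rational $t<s$ we have $\PP((Y)_j>t)>0$, so almost surely $\overline y_{N,j}>t$ eventually; hence $\overline y_{N,j}\uparrow s$. If the limit is attained, i.e.\ $\overline y_{N,j}=s$ for some $N$, then $1-F_j(\overline y_{N,j})=\PP((Y)_j>s)=0$. Otherwise $\overline y_{N,j}<s$ for all $N$, and right-continuity is not what we need: we need $\PP((Y)_j>\overline y_{N,j})\to\PP((Y)_j\ge s)$.

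\textbf{Step 3: the atom at the endpoint, which I expect to be the main obstacle.} Since $\overline y_{N,j}\uparrow s$ from below, continuity of measure gives $\PP((Y)_j>\overline y_{N,j})\to\PP((Y)_j\ge s)=\PP((Y)_j=s)$. If that atom has positive mass, then by Borel--Cantelli some sample equals $s$ almost surely, which puts us in the attained case, where the tail is exactly $0$. If the atom has zero mass, the limit is already $0$.

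Either way the upper tail vanishes almost surely, and the lower tail follows by the same argument with $\operatorname{ess\,inf}$. Summing over the $\dyc$ coordinates via Step 1 proves the lemma. An alternative route is to apply the Glivenko--Cantelli theorem coordinatewise, but the direct monotonicity argument above avoids fussing with left limits.
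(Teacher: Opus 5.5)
Your proof is correct, but it takes a different route from the paper. You identify the almost-sure limit of each running extreme, namely the essential supremum or infimum of the coordinate, and then use continuity of measure. That forces a case split on whether the endpoint is finite and whether it carries an atom.

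The paper never identifies that limit. It fixes $\epsilon>0$, sets $\alpha=\epsilon/(2d_{y,c})$, and compares the sample extremes with the deterministic quantiles $q_j^-=\inf\{t:F_j(t)\ge\alpha\}$ and $q_j^+=\inf\{t:F_j(t)\ge 1-\alpha\}$. The bad events $\{\underline y_{N,j}>q_j^-\}$ and $\{\overline y_{N,j}<q_j^+\}$ have probability at most $(1-\alpha)^N$. By the first Borel--Cantelli lemma, the random box therefore eventually contains the fixed box $\prod_j[q_j^-,q_j^+]$, whose complement has $\PP_Y$-mass at most $2d_{y,c}\alpha=\epsilon$.

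The two approaches buy different things:
\begin{itemize}
\item The quantile comparison needs no case analysis. Atoms and finite or infinite endpoints are absorbed automatically by $\PP((Y)_j<q_j^-)\le\alpha$ and $\PP((Y)_j>q_j^+)\le\alpha$, which hold for every distribution, and it gives a geometric rate for the bad events.
\item Your argument gives the sharper structural fact that the extremes converge to the endpoints of the support. It also shows that the corresponding tail term is eventually exactly zero when an endpoint is an atom.
\item Your version is explicit about intersecting countably many null sets (over $M\in\mathbb N$ and rational $t$). The paper's closing ``since $\epsilon$ is arbitrary'' leaves this implicit; strictly, one should let $\epsilon$ run along a countable sequence, since the exceptional set depends on $\epsilon$.
\end{itemize}

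One small point to tighten: in the case $s<\infty$, state that $\overline y_{N,j}\le s$ almost surely, because $\PP((Y)_j>s)=0$. This is what justifies your dichotomy ``attained, or $<s$ for all $N$''. In any event the upper tail term is $0$ whenever $\overline y_{N,j}\ge s$, so nothing breaks.
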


\newcommand{\ymin}{y^-_{N,j}}
\newcommand{\ymax}{y^+_{N,j}}
\newcommand{\qmin}{q^-_j}
\newcommand{\qmax}{q^+_j}
\newcommand{\badeventmin}{E^-_{N,j}}
\newcommand{\badeventmax}{E^+_{N,j}}

\begin{proof}
    Take $\epsilon \in (0, 1)$ and set $\alpha = \frac{\epsilon}{2d_y}$. For each coordinate $j \in \{1, \dots, \dyc\}$ define the marginal Cumulative Distribution Function (CDF):
    \[
    F_j(t) := \PP(Y_j \leq t)
    \]
    And the $\alpha$ and $1-\alpha$ quantiles:

    \[
    \qmin = \inf\{t : F_j(t) \geq \alpha\},\quad \qmax = \inf\{t : F_j(t) \geq 1 - \alpha \}
    \]

    Consider the minimum and maximum observed values:

    \[
    \ymin = \min_{1\leq i \leq N}(Y_i)_j, \quad  \ymax = \max_{1\leq i \leq N}(Y_i)_j
    \]

    Consider the bad events:

    \[
    \badeventmin := \{\ymin > \qmin\},\quad \badeventmax := \{\ymax < \qmax\}
    \]

    From independence of samples, $\PP(\badeventmin) = \PP(\ymin > \qmin) = \PP(Y_j > \qmin)^N= (1 - F_j(\qmin))^N \leq (1 - \alpha)^N$ and $\PP(\badeventmax) = \PP(\ymax < \qmax) = (\PP(Y_j < \qmax ))^N = (\lim_{t \uparrow \qmax }F_j(t))^N \leq (1-\alpha)^N$. By Borel-Cantelli, since $\sum_{N\geq1} \PP(\badeventmin) < \infty$ and $\sum_{N\geq1} \PP(\badeventmax) < \infty$:

    \[
     \PP(\badeventmin \text{ i.o.}) = 0, \quad \PP(\badeventmax \text{ i.o.}) = 0
    \]
    Therefore, for any given $j$, there exists $N_{0,j}(\omega)$ such that $\ymin \leq \qmin$ and $\ymax \geq \qmax$ for all $N \geq N_{0,j}(\omega)$ almost surely. Take $N_0(\omega) = \max_jN_{0,j}(\omega)$. For all $ N \geq N_0(\omega)$:

    \[
    \prod_{j=1}^{\dyc}[\qmin, \qmax] \subseteq \bYN
    \]

    And:

    \[
    \PP\!\big(\bYN^c\big)
    = \PY\!\Big(\Big(\prod_{j=1}^{\dyc}[\ymin-\delta_N,\ymax+\delta_N]\Big)^c\Big)
    \le
    \PY\!\Big(\Big(\prod_{j=1}^{\dyc}[\qmin,\qmax]\Big)^c\Big)
    \]

    Since

    \[
    (\prod_{j=1}^{\dyc}[\qmin, \qmax])^c  \subseteq \bigcup_{j=1}^{\dyc}(\{Y_j <\qmin\} \cup\{Y_j >\qmax\})
    \]

    we use the union bound:

    \[
    \PP(\bYN(\omega)^c)\leq \PY(\prod_{j=1}^{\dyc}[\qmin, \qmax]^c )\leq \sum_{j=1}^{\dyc}(\PY(Y_j < \qmin) + \PY(Y_j > \qmax)) \leq  \sum_{j=1}^{\dyc}2\alpha = \epsilon
    \]

    Since $\epsilon$ is arbitrary, we conclude $\PP_Y(\bYN^c)\to 0$ almost surely.

\end{proof}

\Cref{lem:box_mass} applies directly to continuous $\Y$. For categorical coordinates, we do not truncate, so in the mixed case, the construction simply combines the continuous min--max box with the full categorical outcome space. The consistency proof in Appendix~\ref{app:consistency} therefore studies the estimator on $\bZN$ while the truncation error vanishes asymptotically.

\subsection{Properties of the Log-loss Gain}

We first show that the population gain of a candidate split is a Jensen gap, related to child averages. Define $\varphi(u) = u\log u$. For a leaf $A \in \pi$ and split $A \rightarrow \{A_l, A_r\}$. Let $\pi'$ be the partition after splitting $A$. The population loss of a partition $\pi$ can be expressed as:
\[
\Lo(\pi) = - \E_{\PXY}[\log f_\pi] = - \E_{\nu}[f_\pi\,\log f_\pi]
\]
so the population gain can also be written:
\begin{align}
\label{eq:gain_decomp}
G(\pi', \pi) &= G(\{A_l, A_r\}, \{A\}) \\ &=
\nu(A_l)\varphi(c_{A_l})  + \nu(A_r)\varphi(c_{A_r})  - \nu(A)\varphi(c_A) \\
&= \left(\frac{\nu(A_l)}{\nu(A)}\varphi(c_{A_l})  + \frac{\nu(A_r)}{\nu(A)}\varphi(c_{A_r})  - \varphi(c_A)\right)\, \nu(A)
\end{align}
where $c_{A} = \frac{1}{\nu(A)}\int_Afd\nu$.

From this identity, we derive the following proposition: the gain is a Jensen Gap.

\begin{proposition}[Population gain is a Jensen Gap]
\label{prop:jensen_gap}
    Consider a cell $A$ and two children $A_l$ and $A_r$, $A = A_l \cup A_r$. Let $\nu_A = \nu(\cdot \cap A)/\nu(A)$ be a normalized restriction of the measure $\nu$ to $A$ and let $\mathcal G := \sigma(\{A_l, A_r\})$ be the $\sigma$-algebra generated by the split. Assume $\nu(A), \nu(A_l), \nu(A_r) > 0$. The gain can be expressed by the following Jensen gap times the measure $\nu(A)$:

    \[
    G(\pi',\pi) = \left( \E_{\nu_A}[\varphi(X)] - \varphi(\E_{\nu_A}[X]) \right)  \cdot \nu(A)
    \]

    with $X(z) = \E_{\nu_A}[ f \mid \mathcal G] (z)$, $z \sim \nu_A$.
\end{proposition}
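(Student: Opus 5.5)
The plan is to compute both sides of the claimed identity explicitly and match them using the decomposition \eqref{eq:gain_decomp}. Set $w_l:=\nu(A_l)/\nu(A)$ and $w_r:=\nu(A_r)/\nu(A)$. These are positive by assumption and satisfy $w_l+w_r=1$, because $A=A_l\uplus A_r$ is a disjoint union (up to the boundary convention).

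\textbf{Step 1: identify the random variable $X$.} The $\sigma$-algebra $\mathcal G=\sigma(\{A_l,A_r\})$ is generated by a finite partition of $A$ into atoms of positive $\nu_A$-mass. The conditional expectation is therefore constant on each atom and equals the $\nu_A$-average of $f$ there. Concretely, for $z\in A_l$,
\[
X(z)=\frac{1}{\nu_A(A_l)}\int_{A_l} f\,d\nu_A=\frac{1}{\nu(A_l)}\int_{A_l}f\,d\nu=c_{A_l},
\]
since the factors $1/\nu(A)$ cancel. Likewise $X\equiv c_{A_r}$ on $A_r$. Each $c_{A_\cdot}$ is finite because $\int_{A}f\,d\nu=\PXY(A)\le 1$ and $\nu(A_\cdot)>0$.

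\textbf{Step 2: compute the two expectations.} Since $X$ takes only two values,
\[
\E_{\nu_A}[\varphi(X)]=w_l\varphi(c_{A_l})+w_r\varphi(c_{A_r}).
\]
By the tower property, $\E_{\nu_A}[X]=\E_{\nu_A}[f]=\nu(A)^{-1}\int_A f\,d\nu=c_A$. Equivalently, $w_lc_{A_l}+w_rc_{A_r}=c_A$ follows from additivity of $\int f\,d\nu$ over $A_l\uplus A_r$. Hence $\varphi(\E_{\nu_A}[X])=\varphi(c_A)$.

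\textbf{Step 3: match with the gain.} Multiplying the difference from Step 2 by $\nu(A)$ gives exactly the last line of \eqref{eq:gain_decomp}. This establishes the identity.

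The only step needing care is justifying \eqref{eq:gain_decomp} itself, which the statement takes as given. I would verify it briefly as follows. The identity $\Lo(\pi)=-\E_\nu[f_\pi\log f_\pi]$ holds because $\log f_\pi$ is $\sigma(\pi)$-measurable, so
\[
\int f\log f_\pi\,d\nu=\int f_\pi\log f_\pi\,d\nu.
\]
Cells outside $A$ contribute identically to $\Lo(\pi)$ and $\Lo(\pi')$, so they cancel in the difference. The convention $0\log0=0$ handles zero-mass cells, so every term is finite. As a remark, convexity of $\varphi$ together with Jensen's inequality then gives $G(\pi',\pi)\ge 0$.
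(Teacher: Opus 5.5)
Your proposal is correct and follows essentially the same route as the paper: you identify $X=\E_{\nu_A}[f\mid\mathcal G]$ as the two-valued function equal to $c_{A_l}$ on $A_l$ and $c_{A_r}$ on $A_r$, write $\E_{\nu_A}[\varphi(X)]$ as the $\nu$-weighted average of $\varphi(c_{A_l})$ and $\varphi(c_{A_r})$, use the tower property to get $\E_{\nu_A}[X]=c_A$, and substitute into \eqref{eq:gain_decomp}. Your extra justification of \eqref{eq:gain_decomp} is correct and fills in a step the paper only asserts; it rests on the $\sigma(\pi)$-measurability of $\log f_\pi$, which gives $\int f\log f_\pi\,d\nu=\int f_\pi\log f_\pi\,d\nu$, and on the cancellation of the untouched cells.
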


\begin{proof}
Define the normalized restriction $\nu_A(B):=\nu(B\cap A)/\nu(A)$ for measurable $B$.
Then
\[
c_A=\frac{1}{\nu(A)}\int_A f\,d\nu=\int_A f\,d\nu_A=\E_{\nu_A}[f].
\]
Let $\mathcal G:=\sigma(\{A_l,A_r\})$. Since $\mathcal G$ has atoms $A_l$ and $A_r$, the conditional expectation $X(z):=\E_{\nu_A}[f\mid\mathcal G](z)$ is $\mathcal G$-measurable, hence constant on each child: $X=u_l\1_{A_l}+u_r\1_{A_r}$. By the defining property of conditional expectation, for $B\in\{A_l,A_r\}$,
\[
\int_B X\,d\nu_A=\int_B f\,d\nu_A.
\]
Thus $u_l=\frac{1}{\nu_A(A_l)}\int_{A_l}f\,d\nu_A=\frac{1}{\nu(A_l)}\int_{A_l}f\,d\nu=c_{A_l}$
and similarly $u_r=c_{A_r}$. Therefore, $\nu_A$-a.e.,
\[
X(z)=\E_{\nu_A}[f\mid\mathcal G](z)=
\begin{cases}
c_{A_l}, & z\in A_l,\\
c_{A_r}, & z\in A_r.
\end{cases}
\]

Let $w:=\nu_A(A_l)=\nu(A_l)/\nu(A)$. Then
\[
\E_{\nu_A}[\varphi(X)]
= w\,\varphi(c_{A_l})+(1-w)\,\varphi(c_{A_r})
= \frac{\nu(A_l)}{\nu(A)}\varphi(c_{A_l})+\frac{\nu(A_r)}{\nu(A)}\varphi(c_{A_r}).
\]
Also, by the tower property,
\[
\E_{\nu_A}[X]=\E_{\nu_A}\big[\E_{\nu_A}[f\mid\mathcal G]\big]=\E_{\nu_A}[f]=c_A.
\]
Plugging these identities into \eqref{eq:gain_decomp} yields
\[
G(\pi',\pi)
= \nu(A)\Big(\E_{\nu_A}[\varphi(X)]-\varphi(\E_{\nu_A}[X])\Big),
\]
as claimed.
\end{proof}

Since the gain is a Jensen Gap, it is immediate that the gain is always greater than or equal to zero.

\begin{corollary}[Non-negativity of the gain]
\label{cor:nonnegativity_gain}
    For any split $\{A\} \to \{A_l, A_r\}$, if $\nu(A), \nu(A_l), \nu(A_r) > 0$, then $G(\{A_l, A_r\}, \{A\}) \geq 0$.
    
\end{corollary}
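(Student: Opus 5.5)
The non-negativity follows directly from \cref{prop:jensen_gap} combined with Jensen's inequality for the convex function $\varphi(u)=u\log u$ on $[0,\infty)$, with the convention $\varphi(0)=0$.

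\textbf{Step 1: reduce to a Jensen gap.} Under the hypotheses $\nu(A),\nu(A_l),\nu(A_r)>0$, \cref{prop:jensen_gap} gives
\[
G(\{A_l,A_r\},\{A\})=\nu(A)\bigl(\E_{\nu_A}[\varphi(X)]-\varphi(\E_{\nu_A}[X])\bigr),
\]
where $X=c_{A_l}\1_{A_l}+c_{A_r}\1_{A_r}$. Since $\nu(A)>0$, it suffices to show that the bracket is nonnegative.

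\textbf{Step 2: check that $\varphi$ is convex and the quantities are finite.} We have $\varphi''(u)=1/u>0$ on $(0,\infty)$. Moreover $\varphi$ is continuous at $0$ with $\varphi(0)=0$, so $\varphi$ is convex on $[0,\infty)$. The cell values $c_{A_l}$ and $c_{A_r}$ are nonnegative because $f\ge 0$. They are finite because $\nu(A)<\infty$ on $\bar{\Z}$ and $\int_A f\,d\nu=\PXY(A)\le 1$. Hence $X$ takes only the two finite values $c_{A_l},c_{A_r}\ge 0$.

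\textbf{Step 3: apply two-point Jensen.} Set $w=\nu(A_l)/\nu(A)\in(0,1)$. Convexity of $\varphi$ gives
\[
w\varphi(c_{A_l})+(1-w)\varphi(c_{A_r})\ \ge\ \varphi\bigl(wc_{A_l}+(1-w)c_{A_r}\bigr)=\varphi(c_A),
\]
where the last equality is the tower identity $\E_{\nu_A}[X]=c_A$ from the proof of \cref{prop:jensen_gap}. Multiplying by $\nu(A)>0$ yields $G\ge 0$.

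\textbf{Expected obstacle.} The only delicate point is the boundary case where some $c$ equals $0$, which arises when a child carries no $\PXY$ mass. This case is covered by convexity of $\varphi$ on the closed half-line together with the $0\log 0=0$ convention. No integrability issues arise, because $X$ is a simple function.
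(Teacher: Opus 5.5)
Your proof is correct and follows the paper's argument: the paper's proof is a one-line appeal to \cref{prop:jensen_gap}, using that the Jensen gap of the convex $\varphi(u)=u\log u$ is nonnegative and multiplying by $\nu(A)>0$; you have simply made the convexity on $[0,\infty)$ and the two-point Jensen step explicit. One small slip in Step 2: finiteness of $c_{A_l},c_{A_r}$ comes from $\nu(A_l),\nu(A_r)>0$ together with $\PXY(\cdot)\le 1$, whereas $\nu(A)<\infty$ is what makes $w=\nu(A_l)/\nu(A)$ and $\nu_A$ well defined.
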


\begin{proof}
    Use the non-negativity of $\nu(A)$ and the non-negativity of the Jensen Gap in~\cref{prop:jensen_gap} to see that $G(\pi', \pi) \geq 0$.
\end{proof}

\section{Tree Construction and Split Search}\label{app:algorithm_details}

\subsection{Continuous Coordinates}\label{app:algorithm_continuous}

\Cref{alg:split_continuous} illustrates how the split search can be executed for continuous variables. The algorithm does not use presorting (storing the sample order for each feature at the root), which could further improve runtime.

\begin{algorithm}[h]
\small
\caption{Find Best Split (Continuous Coordinate $z$)}
\label{alg:split_continuous}
\begin{algorithmic}[1]
\STATE {\bfseries Input:} Coordinate values $z\in\mathbb R^N$, weights $w\in\mathbb R_+^N$,
node index sets $I_x$ (points with $x_i\in A_X$), $I_{xy}$ (points with $(x_i,y_i)\in A$),
flag \texttt{is\_X\_feature}, routine \texttt{is\_valid}(t), and access to $\mu_Y(A_{l,Y}),\mu_Y(A_{r,Y})$.
\STATE {\bfseries Output:} Best gain $G^\star$ and child index sets.

\STATE $G^\star\leftarrow -\infty$
\STATE $I_{x,\mathrm{sort}}\leftarrow \mathrm{argsort}(z[I_x])$,\quad $I_{xy,\mathrm{sort}}\leftarrow \mathrm{argsort}(z[I_{xy}])$
\STATE $I_x^{\uparrow}\leftarrow I_x[I_{x,\mathrm{sort}}]$,\quad $I_{xy}^{\uparrow}\leftarrow I_{xy}[I_{xy,\mathrm{sort}}]$
\STATE $S_x\leftarrow z[I_x^{\uparrow}]$,\quad $S_{xy}\leftarrow z[I_{xy}^{\uparrow}]$
\STATE $W_x\leftarrow \mathrm{cumsum}(w[I_x^{\uparrow}])$,\quad $W_{xy}\leftarrow \mathrm{cumsum}(w[I_{xy}^{\uparrow}])$
\STATE $N_X\leftarrow W_x[-1]$,\quad $N_{XY}\leftarrow W_{xy}[-1]$

\IF{\texttt{is\_X\_feature}}
    \STATE $\mathcal C \leftarrow$ midpoints between consecutive distinct values in $S_x$
\ELSE
    \STATE $\mathcal C \leftarrow$ midpoints between consecutive distinct values in $S_{xy}$
\ENDIF

\STATE Initialize pointers $p_x\leftarrow 0$,\ $p_{xy}\leftarrow 0$ \COMMENT{upper-bound positions}
\FOR{threshold $t \in \mathcal C$ in increasing order}
    \IF{\texttt{is\_X\_feature}}
        \STATE \COMMENT{Advance $p_x$ to the last index with $S_x[p_x]\le t$}
        \WHILE{$p_x+1<|S_x|$ \AND $S_x[p_x+1]\le t$}
            \STATE $p_x\leftarrow p_x+1$
        \ENDWHILE
        \STATE $n_X(A_l)\leftarrow W_x[p_x]$,\quad $n_X(A_r)\leftarrow N_X-n_X(A_l)$
    \ELSE
        \STATE \COMMENT{Y-split does not change $A_X$}
        \STATE $n_X(A_l)\leftarrow N_X$,\quad $n_X(A_r)\leftarrow N_X$
    \ENDIF

    \STATE \COMMENT{Advance $p_{xy}$ to the last index with $S_{xy}[p_{xy}]\le t$}
    \WHILE{$p_{xy}+1<|S_{xy}|$ \AND $S_{xy}[p_{xy}+1]\le t$}
        \STATE $p_{xy}\leftarrow p_{xy}+1$
    \ENDWHILE
    \STATE $n_{XY}(A_l)\leftarrow W_{xy}[p_{xy}]$,\quad $n_{XY}(A_r)\leftarrow N_{XY}-n_{XY}(A_l)$

    \STATE Compute $G$ from Eq.~\eqref{eq:empirical_gain} using these counts and $\mu_Y(A_{l,Y}),\mu_Y(A_{r,Y})$.
    \IF{\texttt{is\_valid}(t)$ \AND G>G^\star$}
        \STATE $G^\star\leftarrow G$, store best $(t^\star,p_x^\star,p_{xy}^\star)$
    \ENDIF
\ENDFOR

\IF{\texttt{is\_X\_feature}}
    \STATE $I_x^{(l)}\leftarrow I_x^{\uparrow}[0:p_x^\star+1]$,\quad $I_x^{(r)}\leftarrow I_x^{\uparrow}[p_x^\star+1: ]$
\ELSE
    \STATE $I_x^{(l)}\leftarrow I_x$,\quad $I_x^{(r)}\leftarrow I_x$
\ENDIF
\STATE $I_{xy}^{(l)}\leftarrow I_{xy}^{\uparrow}[0:p_{xy}^\star+1]$,\quad $I_{xy}^{(r)}\leftarrow I_{xy}^{\uparrow}[p_{xy}^\star+1: ]$

\STATE {\bfseries Return} $G^\star$, $I_x^{(l)},I_x^{(r)},I_{xy}^{(l)},I_{xy}^{(r)}$
\end{algorithmic}
\end{algorithm}

\paragraph{Example (one continuous feature at one node).}
Suppose a leaf $A$ contains $n_X(A)=6$ covariate points with one continuous coordinate
$z$ (weights $w_i\equiv 1$ for simplicity) and $n_{XY}(A)=4$ joint points. Let the
sorted values inside the node be
\[
S_x = (1,\;2,\;4,\;7,\;9,\;10),\qquad
S_{xy} = (2,\;4,\;9,\;10).
\]
Candidate thresholds are the midpoints between consecutive distinct values in $S_x$:
\[
\mathcal C=\Big\{\tfrac{1+2}{2}=1.5,\ \tfrac{2+4}{2}=3,\ \tfrac{4+7}{2}=5.5,\ \tfrac{7+9}{2}=8,\ \tfrac{9+10}{2}=9.5\Big\}.
\]
Consider $t=5.5$. The split index in $S_x$ is the last position with value $\le 5.5$,
namely after $(1,2,4)$, so
\[
n_X(A_l)=3,\qquad n_X(A_r)=3.
\]
Likewise, the split index in $S_{xy}$ is after $(2,4)$, so
\[
n_{XY}(A_l)=2,\qquad n_{XY}(A_r)=2.
\]
Therefore the gain for this candidate is computed by plugging these four counts
(and the corresponding $\mu_Y(A_{l,Y}),\mu_Y(A_{r,Y})$) into \eqref{eq:empirical_gain}.
Repeating this for each $t\in\mathcal C$ only requires locating the split index in the
sorted arrays and reading prefix sums (or, if scanning thresholds in increasing order,
just incrementing the split index), which is exactly why continuous splits can be
evaluated efficiently once the values are sorted.

\subsection{Efficient subset splitting for categorical covariates}
\label{app:cat_subset_split}

We now show how to efficiently find the best split for a categorical coordinate. The approach is similar to that used in standard decision trees, as shown by \citet{fisher1958grouping}.

\paragraph{Problem setting (one categorical $X$-coordinate at one leaf).}
Fix a current leaf $A=A_X\times A_Y$ and a categorical covariate coordinate
$z_\ell\in\Sigma=\{1,\dots,K\}$.
We consider subset splits $S\subset\Sigma$, $S\neq\emptyset,\Sigma$:
\[
A_l := A\cap\{z_\ell\in S\},\qquad A_r := A\cap\{z_\ell\notin S\}.
\]
Since this is an $X$-split, both children share the same outcome side:
$A_{l,Y}=A_{r,Y}=A_Y$, hence $\mu_Y(A_{l,Y})=\mu_Y(A_{r,Y})=\mu_Y(A_Y)$ is constant
and cancels from the empirical gain \eqref{eq:empirical_gain}.

For each category $c\in\Sigma$, define the per-category counts inside $A$:
\[
a_c := n_{XY}\!\big(A\cap\{z_\ell=c\}\big),\qquad
b_c := n_{X}\!\big(A\cap\{z_\ell=c\}\big),
\qquad r_c := \frac{a_c}{b_c},
\]
where we only consider categories with $b_c>0$.
For $S\subset\Sigma$ set
\[
a_S:=\sum_{c\in S}a_c,\quad b_S:=\sum_{c\in S}b_c,\quad r_S:=\frac{a_S}{b_S},
\]
and similarly for $S^c$.

For an $X$-split, $\mu_Y(A_{l,Y})=\mu_Y(A_{r,Y})=\mu_Y(A_Y)$ is constant, and the parent term in \eqref{eq:empirical_gain} is constant in $S$. Dropping constants yields
\begin{equation}
\label{eq:gain_S}
G(S)\equiv a_S\log\!\Big(\frac{a_S}{b_S}\Big)+a_{S^c}\log\!\Big(\frac{a_{S^c}}{b_{S^c}}\Big)
= b_S\varphi(r_S)+b_{S^c}\varphi(r_{S^c}).
\end{equation}

which we seek to maximize.

\begin{proposition}[Optimal subset split is a sorted prefix]
\label{prop:cat_prefix_bregman}
Let $\varphi(u):=u\log u$ and let $D(u,v)$ be its Bregman divergence
\[
D(u,v):=\varphi(u)-\varphi(v)-\varphi'(v)(u-v)=u\log\!\frac{u}{v}-u+v.
\]

Consequently, there exists a maximizer $S^\star$ of $G(S)$ such that, after sorting categories by $r_c$,
\[
r_{\sigma(1)}\le \cdots \le r_{\sigma(K)},
\qquad
S^\star=\{\sigma(1),\dots,\sigma(t)\}\ \text{for some }t\in\{1,\dots,K-1\}.
\]
In particular, the best subset split can be found by sorting $\{r_c\}$ once and scanning the $K-1$ prefix thresholds using prefix sums of $\{a_c,b_c\}$.
\end{proposition}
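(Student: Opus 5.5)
The plan is to reduce the maximization of \eqref{eq:gain_S} to minimizing a Bregman "within-group" loss, and then to apply the classical Fisher argument. For any $S$ and any $c\in S$, the Bregman identity gives $\varphi(r_c)=\varphi(r_S)+\varphi'(r_S)(r_c-r_S)+D(r_c,r_S)$. Multiplying by $b_c$ and summing over $c\in S$, the linear term vanishes because $\sum_{c\in S}b_c(r_c-r_S)=a_S-b_Sr_S=0$. Hence
\[
b_S\varphi(r_S)=\sum_{c\in S}b_c\varphi(r_c)-\sum_{c\in S}b_cD(r_c,r_S),
\]
and the same holds for $S^c$. Since $\sum_{c\in\Sigma}b_c\varphi(r_c)$ does not depend on $S$, maximizing $G(S)$ is equivalent to minimizing
\[
W(S):=\sum_{c\in S}b_cD(r_c,r_S)+\sum_{c\notin S}b_cD(r_c,r_{S^c}).
\]
This is a weighted two-cluster Bregman clustering of the one-dimensional points $r_c$, with cluster centers equal to the weighted means.

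The next step is a variational relaxation. Because the weighted mean minimizes expected Bregman divergence in its second argument ($\sum_{c\in S}b_cD(r_c,v)$ is minimized at $v=r_S$), we have
\[
\min_S W(S)=\min_{S,\,v_1,v_2}\Big[\sum_{c\in S}b_cD(r_c,v_1)+\sum_{c\notin S}b_cD(r_c,v_2)\Big].
\]
Take an optimal triple $(S^\star,v_1,v_2)$ and assume without loss of generality that $v_1\le v_2$. With the centers held fixed, reassigning each $c$ to whichever of $D(r_c,v_1)$, $D(r_c,v_2)$ is smaller cannot increase the objective. So we may assume $S^\star=\{c: D(r_c,v_1)\le D(r_c,v_2)\}$, up to tie-breaking.

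The key step, and the one I expect to be the main obstacle, is to show that this assignment set is an interval in the ordering of the $r_c$. For $v_1<v_2$ one computes
\[
h(u):=D(u,v_1)-D(u,v_2)=-u\big(\log v_1-\log v_2\big)+v_1-v_2,
\]
which is affine in $u$ with positive slope $\log(v_2/v_1)$. Therefore $\{u:h(u)\le0\}$ is a half-line $(-\infty,u_0]$, and $S^\star$ is a prefix of the sorted order $r_{\sigma(1)}\le\cdots\le r_{\sigma(K)}$. Two boundary cases need care. When $v_1=v_2$, every split is equally good, so a prefix is also optimal. When some $r_c=0$ or $v_1=0$, I would use the convention $0\log0=0$ together with a limiting or continuity argument, or handle the degenerate cases directly; this is where I expect the most fiddly work. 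Ties among equal $r_c$ are resolved by noting that any permutation within a block of ties still gives a valid sorted order.

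Finally, I must check that the optimal split is proper, i.e.\ $1\le t\le K-1$. If the optimal reassignment produced an empty or full set, then every proper split would tie with the unsplit value, and any prefix would again be optimal. The computational claim then follows directly: sort the $r_c$ once, and evaluate $G$ at the $K-1$ prefixes using cumulative sums of $a_c$ and $b_c$.
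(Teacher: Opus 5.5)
Your proposal is correct and follows essentially the same route as the paper. Both arguments rewrite $G(S)$ as a constant minus the within-group Bregman loss via $\sum_{c\in S}b_c(r_c-r_S)=0$, show that $D(x,u)-D(x,v)$ is affine in $x$ with slope $\log(v/u)\ge 0$ (so nearest-center assignment is a prefix rule), and finish by using that the weighted mean is the Bregman centroid. Your joint minimization over $(S,v_1,v_2)$ is a cleaner packaging of the paper's ``reassign, then re-center'' step, and your treatment of the degenerate cases (an empty or full reassignment, $v_1=v_2$, zero ratios) is more careful than the paper's, which leaves them implicit.
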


\begin{proof}

Using $\sum_{c\in S} b_c(r_c-r_S)=0$ and the definition of $D$, we have
\[
\sum_{c\in S} b_c D(r_c,r_S)=\sum_{c\in S} b_c\varphi(r_c)-b_S\varphi(r_S),
\]
and similarly for $S^c$. By rearranging and plugging into \eqref{eq:gain_S}, we can see that the empirical gain of the categorical $X$-split $S$ equals (up to additive constants independent of $S$):
\[
G(S) \ \equiv\ b_S\varphi(r_S)+b_{S^c}\varphi(r_{S^c})
\ =\ \sum_{c\in\Sigma} b_c\varphi(r_c)\;-\;
\Big(\sum_{c\in S} b_c D(r_c,r_S)+\sum_{c\in S^c} b_c D(r_c,r_{S^c})\Big).
\]

Let $u:=r_S$ and $v:=r_{S^c}$ and relabel children so that $u\le v$.
Consider the difference in divergence to the two centers for a point $x$:
\[
\Delta(x):=D(x,u)-D(x,v).
\]
A direct expansion shows $\Delta(x)=\alpha+\beta x$ with
$\beta=\varphi'(v)-\varphi'(u)\ge 0$ (since $\varphi'$ is increasing and $u\le v$),
so $\Delta(x)$ is non-decreasing in $x$. Hence, there exists a threshold $t$ such that
categories with $r_c\le t$ weakly prefer $u$ and those with $r_c>t$ weakly prefer $v$.
Therefore, for fixed $(u,v)$, the assignment minimizing:
\[
\sum_c b_c \min\{D(r_c,u),D(r_c,v)\}
\]
is contiguous after sorting by $r_c$,
i.e., a prefix split. Finally, for Bregman divergences, the minimizer over the center of $\sum_{c\in S} b_c D(r_c,\cdot)$ is the weighted mean $r_S$; thus, re-centering after applying the threshold assignment cannot increase the cost. This yields an optimal split of prefix form.
\end{proof}

\paragraph{Remark (categorical $Y$-splits).}
If the split acts on a categorical outcome coordinate $z_\ell\in\Sigma$ (a $Y$-split), then $n_X(\cdot)$ is constant across children ($n_X(A_l)=n_X(A_r)=n_X(A)$), but the $\mu_Y$ terms vary across subsets. In this case, the same sorted-prefix conclusion holds after redefining, for each category $c\in\Sigma$,
\[
a_c := n_{XY}\!\big(A\cap\{z_\ell=c\}\big),\qquad
b_c := \mu_Y\!\big(A_Y\cap\{z_\ell=c\}\big),\qquad
r_c := \frac{a_c}{b_c},
\]
and repeating the Bregman-threshold argument with these $(a_c,b_c,r_c)$.

\paragraph{Time complexity consequences.}
The prefix optimality avoids enumerating all $2^{|\Sigma|}-2$ nontrivial subset splits.
Instead, one computes per-category statistics $(a_c,b_c)$ inside the leaf and sorts
categories by $r_c=a_c/b_c$, then scans the $|\Sigma|-1$ prefixes using prefix sums.
This costs $O(n_{XY}(A)+|\Sigma|\log|\Sigma|)$ time (or $O(|\Sigma|\log|\Sigma|)$ once
the per-category counts are available).

\subsection{Best-first growth under a split budget}\label{app:algorithm_growth}

The practical tree-growing rule is gain-based best-first search under a global split budget $k_N$. For each current leaf $A$, we evaluate admissible one-coordinate splits, record the best empirical log-loss gain, and split the leaf--split pair with the largest gain among all leaves. Operationally, we maintain a priority queue keyed by each leaf's current best gain; after a split, only the affected entries need to be recomputed.


\section{Consistency Proofs}\label{app:consistency}

To prove consistency, we use the results from \cite{lugosi1996consistency} for consistency of data-driven partitions.
We rephrase Proposition 1 of their work for clarity.

\begin{corollary}
\label{cor:lugosi1}
    \cite{lugosi1996consistency} Let $Z_1, Z_2, \dots$ be i.i.d. random vectors in $\X \times \Y$ with $Z_i \sim \lambda$, and let $\mathcal A_1, \mathcal A_2, \dots$ be a sequence of partition families. If $N$ tends to infinity:

    \begin{enumerate}
        \item $N^{-1}m(\mathcal{A}_N) \rightarrow 0$
        \item $N^{-1}\log\Delta_N^*(\mathcal{A}_N) \rightarrow 0$
    \end{enumerate}

    Then 
    \begin{equation}
        \sup_{\pi \in \mathcal{A}_N}\sum_{A\in\pi}|\lambda_N(A) - \lambda (A)| \rightarrow 0
    \end{equation}
    with probability one.
\end{corollary}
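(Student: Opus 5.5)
The plan is to reduce the $L^1$ discrepancy over a partition to a uniform deviation over a class of sets. Then I will bound that class's shatter coefficient using the two hypotheses, apply the Vapnik--Chervonenkis inequality, and conclude with Borel--Cantelli. Here $\lambda_N$ denotes the empirical measure of $Z_1,\dots,Z_N$.

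\emph{Step 1 (reduction to sets).} For a fixed finite partition $\pi$, split its cells according to the sign of $\lambda_N(A)-\lambda(A)$. Since both $\lambda_N$ and $\lambda$ are probability measures, the positive and negative parts have equal total mass. This gives
\[
\sum_{A\in\pi}|\lambda_N(A)-\lambda(A)| \;=\; 2\max_{\mathcal S\subseteq \pi}\Big(\lambda_N\big(\textstyle\bigcup_{A\in\mathcal S}A\big)-\lambda\big(\textstyle\bigcup_{A\in\mathcal S}A\big)\Big).
\]
Let $\mathcal B_N$ be the class of all sets that are unions of cells of some $\pi\in\mathcal A_N$. Then
\[
\sup_{\pi\in\mathcal A_N}\sum_{A\in\pi}|\lambda_N(A)-\lambda(A)| \le 2\sup_{B\in\mathcal B_N}|\lambda_N(B)-\lambda(B)|.
\]

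\emph{Step 2 (counting traces).} Take any point set $z_1^n$. The trace of a set $B\in\mathcal B_N$ on $z_1^n$ is determined by two pieces of data: the partition of $z_1^n$ induced by the underlying $\pi$, and a choice of which of its at most $m(\mathcal A_N)$ nonempty cells to include. Hence the shatter coefficient satisfies
\[
s(\mathcal B_N,n)\le \Delta^*_n(\mathcal A_N)\,2^{m(\mathcal A_N)}.
\]

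\emph{Step 3 (VC inequality and Borel--Cantelli).} The Vapnik--Chervonenkis inequality gives, for each $\epsilon>0$,
\[
\PP\Big(\sup_{B\in\mathcal B_N}|\lambda_N(B)-\lambda(B)|>\epsilon\Big)\le 8\, s(\mathcal B_N,2N)\,e^{-N\epsilon^2/32}.
\]
Taking logarithms, the bound on the right-hand side is at most
\[
\log 8+\log\Delta^*_{2N}(\mathcal A_N)+m(\mathcal A_N)\log 2-N\epsilon^2/32 .
\]
By hypotheses~1 and~2, the middle two terms are $o(N)$. So for large $N$ the probability is at most $e^{-N\epsilon^2/64}$, which is summable in $N$. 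Borel--Cantelli then gives almost sure convergence for each fixed $\epsilon$. Taking a countable sequence $\epsilon\downarrow 0$ yields the claim.

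\emph{Main obstacle.} I expect the difficulty to be that the VC inequality involves the growth function on the doubled sample, $\Delta^*_{2N}(\mathcal A_N)$, while hypothesis~2 controls only $\Delta^*_N(\mathcal A_N)$. The cleanest general fact I would try first is that a partition of $2N$ points is determined by its restrictions to the two halves together with a matching of their cells. This gives
\[
\Delta^*_{2N}\le(\Delta^*_N)^2\,(m+1)^m,
\]
but the extra $m\log(m+1)$ term is not obviously $o(N)$ under hypothesis~1 alone. So I would follow Lugosi and Nobel and avoid the ghost sample: work with a symmetrization and a counting argument that stays on the original $N$ points, for which their own bound applies directly. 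Failing that, I would state the growth condition at $2N$, which costs nothing for the tree families used in \cref{the:consistency_1}, since their growth functions are polynomial in $N$.
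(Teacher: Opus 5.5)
The paper gives no proof of this corollary; it restates the result of \cite{lugosi1996consistency} and cites it. Your outline is the standard argument behind that result: reduce $\sum_{A\in\pi}|\lambda_N(A)-\lambda(A)|$ to twice a supremum over unions of cells, bound traces by $\Delta^*_n(\A_N)\,2^{m(\A_N)}$, then apply a VC inequality and Borel--Cantelli. Steps~1 and~2 are correct. Step~3 as written does not close, however. Hypothesis~2 controls $\Delta^*_N(\A_N)$, not $\Delta^*_{2N}(\A_N)$, and your matching bound $(\Delta^*_N)^2(m+1)^m$ is too crude to bridge the two. Your fallback of restating hypothesis~2 at $2N$ would prove a weaker result than the one stated. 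The obstacle must be removed, and it can be, in either of two ways.

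\emph{First fix: use the one-sample form of the VC inequality.} With your constants $8$ and $1/32$, the inequality already holds with $s(\mathcal B_N,N)$ in place of $s(\mathcal B_N,2N)$ (Devroye, Gy\"orfi and Lugosi, Thm.~12.5). After the ghost-sample step, insert Rademacher signs. If $\bigl|\frac1N\sum_i\sigma_i(\1_B(Z_i)-\1_B(Z_i'))\bigr|>\epsilon/2$, then one of $\bigl|\frac1N\sum_i\sigma_i\1_B(Z_i)\bigr|$ or $\bigl|\frac1N\sum_i\sigma_i\1_B(Z_i')\bigr|$ exceeds $\epsilon/4$. You can then condition on $Z_1,\dots,Z_N$ alone, so the union bound runs over traces on $N$ points. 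This gives $s(\mathcal B_N,N)\le\Delta^*_N(\A_N)\,2^{m(\A_N)}$, and hypotheses~1--2 apply verbatim. This is exactly the ``stay on the original $N$ points'' argument you gesture at.

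\emph{Second fix: sharpen the count.} In fact $\Delta^*_{2N}(\A)\le\Delta^*_N(\A)^3$ whenever $2m(\A)\le N$. Group the induced partitions $P$ of a $2N$-point set by their restrictions $(Q_1,Q_2)$ to the two halves; there are at most $(\Delta^*_N)^2$ groups. Within a group, $P$ is determined by which blocks of $Q_1$ are merged with which blocks of $Q_2$. Pick one representative per block of $Q_1$ and of $Q_2$, and call this set $R$; it has $|R|\le 2m(\A)\le N$. The restriction $P|_R$ is induced by the same $\pi\in\A$ and reveals the matching, so $P\mapsto P|_R$ is injective on the group. Hence each group has at most $\Delta(\A,R)\le\Delta^*_N(\A)$ members (pad $R$ with repeated points to length $N$). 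Hypothesis~1 gives $2m(\A_N)\le N$ for large $N$, so $N^{-1}\log\Delta^*_{2N}(\A_N)\le 3N^{-1}\log\Delta^*_N(\A_N)\to0$, and your Step~3 then goes through unchanged.
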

    
We now adapt the problem setting of \cite{lugosi1996consistency} to our setting.
A $N$-sample partitioning rule $\pi_N$ associates every dataset
$\D_N\in\Z^N$ with a measurable partition of the truncated domain
$\bZN(\D_N)$. Let $\Pi=\{\pi_1,\pi_2,\dots\}$ be a partitioning scheme, and let
$\A_N:=\{\pi_N(\D_N):\D_N\in\Z^N\}$ be the corresponding non-random family of
truncated partitions.

When applying uniform convergence results on $\Z$, we use the full-space
extension
\[
\widetilde{\pi}_N(\D_N)
:=\pi_N(\D_N)\cup\{\Z\setminus\bZN(\D_N)\},
\]
omitting empty cells, and denote the corresponding family by
$\widetilde{\A}_N$. These extensions add at most one cell, so
$m(\widetilde{\A}_N)\le m(\A_N)+1$. Moreover, for any finite
$\D\subseteq\Z$, the induced full-space partition is determined by the
truncated-cell intersections and by which points of $\D$ fall outside
$\bZN(\D_N)$. The latter membership pattern belongs to the class
$\mathcal B:=\{\X\times B:\ B\subseteq\Y \text{ is an axis-aligned box}\}$,
which has finite VC dimension in the mixed-type setup. Hence, by
Sauer--Shelah,
\[
\Delta_N^*(\widetilde{\A}_N)
\le
\Delta_N^*(\A_N)\,\Delta_{\mathcal B}(N),
\qquad
\frac{1}{N}\log\Delta_{\mathcal B}(N)\to 0.
\]
Therefore conditions 1 and 2 in~\cref{the:consistency_1} also hold for
$\widetilde{\A}_N$.

The mixed-type setup and truncation construction are collected in~\Cref{app:estimator_details}, and the practical best-first split-search mechanics are collected in~\Cref{app:algorithm_details}. This appendix records the auxiliary results used in the proof of~\Cref{the:consistency_1}.

\subsection{Uniform bound for VC Dimension of X Partitions}

In the main proof regarding consistency of data-driven partitions of $\Z$, the bound proven in the Lemma \ref{lem:vc_dev_PX_as_only} will be used.

\begin{lemma}[Almost-sure uniform VC deviation for $\hPX$]
\label{lem:vc_dev_PX_as_only}
Let $\mathcal C_X$ have VC dimension $V_X<\infty$. Then with probability one,
for all sufficiently large $N$,
\[
\sup_{U\in\mathcal C_X}|\hPX(U)-\PP_X(U)|
\ \le\
\sqrt{\frac{8}{N}\Big(V_X\log(2eN/V_X)+\log(4N^2)\Big)}
\ =\ O\!\Big(\sqrt{\tfrac{\log N}{N}}\Big).
\]
\end{lemma}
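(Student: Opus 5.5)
This is the classical Vapnik--Chervonenkis uniform deviation inequality followed by a Borel--Cantelli argument. The plan has three steps.

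\emph{Step 1 (deviation inequality).} For fixed $N$ and $\epsilon>0$, apply the VC inequality
\[
\PP\Big(\sup_{U\in\mathcal C_X}|\hPX(U)-\PX(U)|>\epsilon\Big)\le 4\,S_{\mathcal C_X}(2N)\,e^{-N\epsilon^2/8}.
\]
This is obtained by symmetrization: introduce a ghost sample, then Rademacher signs, then condition on the $2N$ points. On those points the class induces at most $S_{\mathcal C_X}(2N)$ distinct traces, so a union bound together with Hoeffding's inequality gives the factor $e^{-N\epsilon^2/8}$.

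\emph{Step 2 (choice of $\epsilon_N$).} Bound the shatter coefficient by Sauer--Shelah, $S_{\mathcal C_X}(2N)\le (2eN/V_X)^{V_X}$, valid for $2N\ge V_X$. Then take
\[
\epsilon_N:=\sqrt{\tfrac{8}{N}\big(V_X\log(2eN/V_X)+\log(4N^2)\big)}.
\]
With this choice the right-hand side of Step 1 is at most
\[
4\,(2eN/V_X)^{V_X}\exp\!\big(-V_X\log(2eN/V_X)-\log(4N^2)\big)=N^{-2}.
\]

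\emph{Step 3 (almost-sure conclusion).} Since $\sum_N N^{-2}<\infty$, Borel--Cantelli shows that almost surely the events $\{\sup_U|\hPX(U)-\PX(U)|>\epsilon_N\}$ occur only finitely often. This is exactly the claimed bound for all sufficiently large $N$. Finally, $\epsilon_N^2=O(\log N/N)$ because both logarithmic terms are $O(\log N)$, which gives the $O(\sqrt{\log N/N})$ rate.

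\emph{Measurability.} The supremum over $\mathcal C_X$ must be measurable. For the concrete class used here (rectangles times categorical cylinders) the supremum can be taken over a countable subclass with rational endpoints, because empirical and true measures are right-continuous limits along such endpoints. For a general class I would simply assume the standard permissibility condition.

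\emph{Main obstacle.} The only delicate point is the constants. Some textbook versions of the VC inequality state the factor as $8S(N)e^{-N\epsilon^2/32}$, which does not match the stated $\epsilon_N$. Matching the stated form requires the sharper variant with constants $4$ and $1/8$, together with the shatter coefficient evaluated at $2N$. I would cite that version directly, e.g.\ Devroye--Gy\"orfi--Lugosi, Theorem 12.5.
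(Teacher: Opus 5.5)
Your proposal is correct and follows the paper's proof step for step (ghost-sample symmetrization, reduction to at most $\Delta_{\mathcal C_X}(2N)$ traces with Hoeffding giving $4\,\Delta_{\mathcal C_X}(2N)e^{-N\varepsilon^2/8}$, Sauer--Shelah, and Borel--Cantelli with $\delta_N=N^{-2}$). Your explicit Rademacher-sign step is what actually justifies applying Hoeffding after conditioning on the $2N$ points, and your measurability remark covers a point the paper omits; the only slip is the citation, since Devroye--Gy\"orfi--Lugosi's Theorem 12.5 is the $8\,s(\mathcal A,N)e^{-N\epsilon^2/32}$ form, whereas the constants $4$ and $1/8$ are those of the original Vapnik--Chervonenkis (1971) bound, which your own sketch already derives.
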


\begin{proof}
Let $\mathcal F:=\{\mathbf 1_U:\ U\in\mathcal C_X\}$. For $f\in\mathcal F$, write
$\PP f:=\E[f(X)]$ and $\widehat{\PP} f:=\frac1N\sum_{i=1}^N f(X_i)$.

\paragraph{Step 1: Symmetrization.}
Let $X_1',\dots,X_N'$ be an independent ghost sample with law $\PP_X$, and let
$\widehat{\PP}' f:=\frac1N\sum_{i=1}^N f(X_i')$. For every $\varepsilon>0$,
\begin{equation}
\label{eq:vc_symm_merged}
\PP\!\Big(\sup_{f\in\mathcal F}|\widehat{\PP} f-\PP f|>\varepsilon\Big)
\ \le\
2\,\PP\!\Big(\sup_{f\in\mathcal F}|\widehat{\PP} f-\widehat{\PP}' f|>\varepsilon/2\Big).
\end{equation}

\paragraph{Step 2: Finite reduction and Hoeffding.}
Condition on the realized $2N$ points $(X_1,\dots,X_N,X_1',\dots,X_N')$.
The supremum over $f\in\mathcal F$ depends only on the induced labeling of these $2N$
points, hence is a maximum over at most $\Delta_{\mathcal C_X}(2N)$ labelings.
For a fixed labeling (equivalently, fixed $f$), the difference
$\widehat{\PP} f-\widehat{\PP}' f$ is an average of $N$ independent mean-zero terms in $[-1,1]$,
so Hoeffding gives
\[
\PP\!\Big(|\widehat{\PP} f-\widehat{\PP}' f|>\varepsilon/2\Big)
\le 2\exp\!\Big(-\frac{N\varepsilon^2}{8}\Big).
\]
A union bound over at most $\Delta_{\mathcal C_X}(2N)$ labelings and then \eqref{eq:vc_symm_merged} yield
\begin{equation}
\label{eq:vc_tail_merged}
\PP\!\Big(\sup_{U\in\mathcal C_X}|\hPX(U)-\PP_X(U)|>\varepsilon\Big)
\ \le\
4\,\Delta_{\mathcal C_X}(2N)\exp\!\Big(-\frac{N\varepsilon^2}{8}\Big).
\end{equation}

\paragraph{Step 3: Sauer--Shelah.}
Since $\mathrm{VCdim}(\mathcal C_X)=V_X$, Sauer--Shelah implies for $N\ge V_X$,
\[
\Delta_{\mathcal C_X}(2N)\ \le\ \Big(\frac{2eN}{V_X}\Big)^{V_X}.
\]
Substituting into \eqref{eq:vc_tail_merged} gives, for $N\ge V_X$,
\begin{equation}
\label{eq:vc_tail_sauer_merged}
\PP\!\Big(\sup_{U\in\mathcal C_X}|\hPX(U)-\PP_X(U)|>\varepsilon\Big)
\ \le\
4\Big(\frac{2eN}{V_X}\Big)^{V_X}\exp\!\Big(-\frac{N\varepsilon^2}{8}\Big).
\end{equation}

\paragraph{Part (1): High-probability bound.}
Choose $\varepsilon=\varepsilon_N(\delta)$ so that the right-hand side of
\eqref{eq:vc_tail_sauer_merged} is at most $\delta$:
\[
\varepsilon_N(\delta)
:= \sqrt{\frac{8}{N}\Big(V_X\log(2eN/V_X)+\log(4/\delta)\Big)}.
\]
Then
\[
\PP\!\Big(\sup_{U\in\mathcal C_X}|\hPX(U)-\PP_X(U)|>\varepsilon_N(\delta)\Big)
\le \delta,
\].

\paragraph{Part (2): Almost-sure bound (Borel--Cantelli).}
Set $\delta_N:=N^{-2}$ and apply (1) with $\delta=\delta_N$:
\[
\PP\!\Big(\sup_{U\in\mathcal C_X}|\hPX(U)-\PP_X(U)|
>\varepsilon_N(\delta_N)\Big)\ \le\ \delta_N.
\]
Since $\sum_{N\ge 1}\delta_N<\infty$, Borel--Cantelli implies that with probability one,
only finitely many of these events occur; equivalently, almost surely for all large $N$,
\[
\sup_{U\in\mathcal C_X}|\hPX(U)-\PP_X(U)|\ \le\ \varepsilon_N(\delta_N).
\]
Finally,
\[
\varepsilon_N(\delta_N)
=\sqrt{\frac{8}{N}\Big(V_X\log(2eN/V_X)+\log(4N^2)\Big)}
=: \varepsilon_N^{\mathrm{a.s.}} = O\left(\sqrt{\log N / N}\right)
\]
for all sufficiently large $N$.
\end{proof}

\newcommand{\termone}{\|\fn - \fxn\|_{L^1(\nu;\bZN)}}
\newcommand{\termtwo}{\|\fxn - \fs\|_{L^1(\nu;\bZN)}}
\newcommand{\termthree}{\|\fs - f\|_{L^1(\nu;\bZN)}}

\subsection{Proof of Theorem \ref{the:consistency_1}}\label{app:fullproofthm31}

We now state the proof of the consistency theorem of data-driven partitions.
For mixed spaces, we use the product metric associated with the compactified
continuous coordinates:
\[
\dham(u,v):=\mathbf 1\{u\neq v\}.
\]
For $z=(x,y)$ and $z'=(x',y')$, define
\[
d_\Z(z,z')
:=\dcont(z,z')
   + \sum_{j=1}^{\dxk}\dham(x^{(k)}_j,x'^{(k)}_j)
   + \sum_{j=1}^{\dyk}\dham(y^{(k)}_j,y'^{(k)}_j).
\]

Then $\distZ(z,S):=\inf_{u\in S} d_{\Z}(z,u)$ for $S\subset\Z$. 

\ConsistencyOne*

\begin{proof}[Proof of Theorem \ref{the:consistency_1}]

Let $\bar{\Y}_N$ be defined as in \eqref{eq:yn_improved} and set $\bZN:=\X\times\bar{\Y}_N$. We can write:

\[
\|f_N-f\|_{L^1(\nu)}
=\int_{\X}\int_{\bYN}\!|f_N-f|\,d\mu_Y\,d\PP_X
+\int_{\X}\int_{\bYN^{\,c}}\!|f_N-f|\,d\mu_Y\,d\PP_X.
\]

Since $f_N = 0$ on $\bZ^C$, the last term equals $\PY(\bYN^C)$, which goes to zero almost surely as $N\to\infty$ according to Lemma \ref{lem:box_mass}. Therefore, we can focus on $\|\fn-f\|_{L^1(\nu;\bar{\Z}_N)}$.

Let:
\[
f_N'(z) = \frac{\hPXY(\pi_N[z])}{\PX(\pi_N[z])\mu_Y(\pi_N[z])}, \quad \fstar_N = \frac{\PXY(\pi_N[z])}{\PX(\pi_N[z])\mu_Y(\pi_N[z])}
\]
By triangle inequality:

\[
||\fn - f||_{\lnuz} \leq \underbrace{\termone}_{(\mathrm{I})}
+
\underbrace{\termtwo}_{(\mathrm{II})}
+
\underbrace{\termthree}_{(\mathrm{III})}.
\]

\paragraph{Bounding $(I)$}

By~\Cref{ass:admissible_leaves}, almost surely every leaf $A\in\pi_N$ has
$\hPX(A_X)>0$ and $\mu_Y(A_Y)>0$, so the ratios below are well-defined.

\begin{align*}
    \termone &= \|\frac{\hPXY}{\hPX\mu_Y} - \frac{\hPXY}{\PX\mu_Y} \|_{\lnuz} = \| \frac{\hPXY}{\PX\mu_Y}\left(\frac{\PX -\hPX}{\hPX}\right) \|_{\lnuz} \\
    &= \sum\limits_{A \in \pi_N}|\frac{\hPXY(A)}{\PX(A)\mu_Y(A)}\left(\frac{\PX(A) -\hPX(A)}{\hPX(A)}\right)| \PX(A)\mu_Y(A) \\
    &= \sum\limits_{A \in \pi_N} |\frac{\hPXY(A)}{\hPX(A)} ( \PX(A) -\hPX(A))|
\end{align*}
 
The key point is that the $A_X$'s may overlap, so we \emph{cannot} invoke a
partition-based bound on the sum. Instead, we group terms by \emph{distinct}
projected sets. Let $\mathcal C_X$ denote the collection of all covariate
regions in $\X$ that can arise as the $X$-projection of a leaf by iterating the
permitted split tests (axis-aligned threshold splits for continuous features
and subset-membership splits for categorical features). Assume that
$\mathcal C_X$ has finite VC dimension, and denote
$V_X := \mathrm{VCdim}(\mathcal C_X)$.

Let
\[
\mathcal U(\pi_N):=\{A_X:\ A\in\pi_N\}\subseteq\mathcal C_X,
\]
and for each $U\in\mathcal U(\pi_N)$ define
\[
w_U
:=\sum_{A\in\pi_N:\ A_X=U}\frac{\widehat{\PP}_{XY}(A)}{\hPX(U)}.
\]
Because $\bigcup_{A:\,A_X=U}A \subseteq U\times\Y$ and empirical mass is additive
over disjoint leaves,
\[
\sum_{A:\,A_X=U}\widehat{\PP}_{XY}(A)\ \le\ \widehat{\PP}_{XY}(U\times\YN)=\hPX(U),
\]
hence $0\le w_U\le 1$. Therefore
\begin{align*}
\|\fn-\fxn\|_{L^1(\nu;\bZN)}
&=
\sum_{U\in\mathcal U(\pi_N)} w_U\,\big|\PP_X(U)-\hPX(U)\big|
\ \le\
\sum_{U\in\mathcal U(\pi_N)} \big|\PP_X(U)-\hPX(U)\big| \\
&\le
|\mathcal U(\pi_N)|\cdot \sup_{U\in\mathcal C_X}\big|\PP_X(U)-\hPX(U)\big|.
\end{align*}
Now apply Lemma~\ref{lem:vc_dev_PX_as_only}: $\sup_{U\in\mathcal C_X}\big|\PP_X(U)-\hPX(U)\big|$ is almost surely bounded with rate $O(\sqrt{\log N / N})$. Also, $\mathcal U(\pi_N) \leq m_X(\A_N)$. Consequently, if
\[
m_X(\mathcal A_N)\,\sqrt{\frac{\log N}{N}}\ \longrightarrow\ 0,
\]
then by condition 4. $(\mathrm{I})=\|\fn-\fxn\|_{L^1(\nu;\bZN)}\to 0$ almost surely.

\paragraph{Bounding $(II)$}

\begin{align*}
    \termtwo = \|\frac{1}{\PX\mu_Y}(\hPXY - \PXY)\|_{\lnuz} = \sum_{A\in\pi_N}|\hPXY(A) - \PXY(A)|
\end{align*}

For the application of~\cref{cor:lugosi1}, view the truncated partition as a
full-space partition by adding the complement of the random truncation:
\[
\widetilde{\pi}_N:=\pi_N\cup\{\Z\setminus\bZN\},
\]
omitting empty cells. The family $\A_N$ in~\cref{the:consistency_1} is this
truncated partition family, while $\widetilde{\A}_N$ denotes the corresponding
full-space extension family defined above. Since
$\pi_N\subseteq\widetilde{\pi}_N$,
\[
\sum_{A\in\pi_N}|\hPXY(A)-\PXY(A)|
\le
\sum_{A\in\widetilde{\pi}_N}|\hPXY(A)-\PXY(A)|.
\]
Applying~\cref{cor:lugosi1} to the full-space family $\widetilde{\A}_N$ with
$\lambda=\PXY$, and using the inherited versions of conditions 1 and 2, this
term also converges to 0.

\paragraph{Bounding $(III)$}To bound the term $(III)$, we use a similar strategy to \cite{lugosi1996consistency}, but in a more general setting where $\X\times\Y$ can be composed of integer-valued, discrete, or categorical dimensions.

Since $(\mathcal Z,d_{\mathcal Z})$ is a separable metric space and $\nu$ is a Borel
measure, for every $\varepsilon>0$ there exists a nonnegative simple function
$g=\sum_{j=1}^J a_j \mathbf 1_{G_j}$ with pairwise disjoint $G_j\subseteq\mathcal Z$
and $\nu(\partial G_j)=0$ such that
\[
\|f-g\|_{L^1(\nu)}<\varepsilon.
\]
In particular,
\[
\|f-g\|_{L^1(\nu;\bar{\mathcal Z}_N)} \le \|f-g\|_{L^1(\nu)}<\varepsilon.
\]
Define
\[
g_N(z):=\frac{\int_{\pi_N[z]} g\,d\nu}{\nu(\pi_N[z])}.
\]

We split $(III)$ into three components, based on these newly introduced functions $g$ and $g_N$:
\[
\termthree \leq \underbrace{\| \fs - g_N\|_{\lnuz}}_{(*)} + \underbrace{\|g_N - g\|_{\lnuz}}_{(\dagger)} + \underbrace{\|g - f\|_{\lnuz}}_{<\varepsilon}
\]

The third term is bounded by definition. Bounding $(*)$:

\begin{align}
    \| \fs - g_N\|_{\lnuz} &= \int_{\bZN} |\fs - g_N|\,d\nu = \sum_{A \in \pi_N}|\int_Af\, d\nu - \int_Ag\,d \nu| \\
    &\leq \sum_{A \in \pi_N}\int_A|f -g|\,d\nu = \int_{\bZN}|f-g|\,d\nu <\varepsilon
\end{align}

\newcommand{\dist}{\text{dist}}
\newcommand{\pG}{\partial G}
To bound $(\dagger)$, let $M:= \|g\|_\infty = \max\limits_{j\in\{1,\dots,J\}}a_j$. Let $\partial G_j$ denote the topological boundary of $G_j$, $\nu(\partial G_j) = 0$ for all $j\in \{1,\dots,J\}$. Choose $ 1 > \eta > 0$ and define two sets:
\begin{equation*}
U(\eta)
:=\bigcup_{j=1}^J \{z\in\bar{\Z}_N:\ \distZ(z,\partial G_j)\le \eta\}
\end{equation*}
\begin{equation*}
D_N(\eta) = D_N^{cont}(\eta) \cup D_N^{cat}
\end{equation*}

where $D_N^{\mathrm{cont}}(\eta)=\{z\in\bar{\mathcal Z}_N:\ \diamc(\pi_N[z])>\eta\}$
and $D_N^{\mathrm{cat}}=\{z:\ \exists k\ \text{s.t.}\ |S_k(\pi_N[z])|>1\}$.

We study $(\dagger)$ in two disjoint domains: $\bZ \setminus (U(\eta) \cup D_N(\eta))$ and $U(\eta) \cup D_N(\eta)$.

\begin{claim}
    For all, $z \in \bZ \setminus (U(\eta) \cup D_N(\eta))$ we have $g_N=g$ and  $\int_{\bZ \setminus (U(\eta) \cup D_N(\eta))}|g_N - g|d\nu = 0$.
\end{claim}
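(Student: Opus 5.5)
Fix $z\in\bar{\Z}_N\setminus(U(\eta)\cup D_N(\eta))$ and set $A:=\pi_N[z]$. The goal is to show that $g$ is constant on $A$ and equal to $g(z)$. Once this holds, $g_N(z)=\frac{1}{\nu(A)}\int_A g\,d\nu=g(z)$ whenever $\nu(A)>0$. The integral statement then follows immediately, because the integrand $|g_N-g|$ vanishes identically on the domain of integration. (Cells with $\nu(A)=0$ form a $\nu$-null set and do not affect the integral, whatever convention is used for $g_N$ there.)

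First, use $z\notin D_N(\eta)$. Since $z\notin D_N^{cat}$, every categorical coordinate of $A$ is resolved, so $|S_k(A)|=1$ for all $k$. Hence any $z'\in A$ agrees with $z$ on all categorical coordinates, and $\sum\dham$ contributes $0$ to $d_\Z(z,z')$. Since $z\notin D_N^{cont}(\eta)$, we have $\diamc(A)\le\eta$, so $d_\Z(z,z')=\dcont(z,z')\le\eta$ for all $z'\in A$. Thus $A$ is contained in the closed $d_\Z$-ball of radius $\eta$ around $z$.

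Second, use $z\notin U(\eta)$, which gives $\distZ(z,\partial G_j)>\eta$ for every $j$. Suppose $g$ is not constant on $A$. Then there is $z'\in A$ with $\1_{G_j}(z')\neq\1_{G_j}(z)$ for some $j$, since $g$ is determined by membership in the disjoint sets $G_j$, with value $0$ off $\bigcup_j G_j$. I would argue that the segment from $z$ to $z'$ crosses $\partial G_j$. Concretely, the cell $A$ is a product of intervals in the continuous coordinates (after the monotone transforms $\psi_r$) and singletons in the categorical ones, so it is convex in the continuous coordinates. The path $t\mapsto (1-t)z+tz'$, taken in $\psi$-coordinates with the categorical part fixed, therefore stays in $A$. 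This path is continuous for $d_\Z$, and its endpoints lie in $G_j$ and $G_j^c$ respectively. A connectedness argument then produces a point $w$ on the path with $w\in\partial G_j$. Because $w\in A$, we get $d_\Z(z,w)\le\eta$, contradicting $\distZ(z,\partial G_j)>\eta$. Hence $g\equiv g(z)$ on $A$.

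The main obstacle is this connectedness step. I have to check that the path lies in $A$, which requires the product and interval structure of the cells together with the half-open boundary convention. I also have to check that the path is $d_\Z$-continuous; this holds because categorical coordinates are fixed along it and $\dcont$ is continuous in the $\psi$-coordinates. Boundary effects at the edge of $\bar{\Z}_N$ are harmless, since the path never leaves $A\subseteq\bar{\Z}_N$.
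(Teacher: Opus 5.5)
Your proof is correct and follows the same route as the paper. Both arguments first use $z\notin D_N(\eta)$ to place $\pi_N[z]$ inside the closed $d_{\Z}$-ball of radius $\eta$ about $z$, and then use $z\notin U(\eta)$ to conclude that $g$ is constant on the cell. Your connectedness argument along a segment inside the cell (convex in $\psi$-coordinates) justifies the step the paper only asserts, namely $B(z,\eta)\subseteq G_j$; your indicator-difference formulation also covers points outside $\bigcup_j G_j$ and cells with $\nu(\pi_N[z])=0$, both of which the paper passes over.
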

\begin{proof}
    Pick $z \in \bZ \setminus  U(\eta)$. Then there exists a $G_j$ such that $z\in G_j$ and $\distZ(z, \pG_j) > \eta$. Let $B(z, \eta) \subseteq G_j$ denote a ball of radius $\eta$ centered on $z$. If, in addition, $z\notin D_N(\eta)$ then (i) $|S_k(\pi_N[z])|=1$ for every categorical coordinate $k$, so $d_\Z(z,u)=d_{\mathrm{cont}}(z,u)$ for all $u\in\pi_N[z]$, and (ii) $\diamc(\pi_N[z])\le \eta$, hence $d_{\mathrm{cont}}(z,u)\le \eta$ for all $u\in\pi_N[z]$.
    
Therefore $d_\Z(z,u)\le \eta$ for all $u\in\pi_N[z]$, i.e.\ $\pi_N[z]\subseteq B(z,\eta)$.
Hence $\pi_N[z] \subseteq B(z, \eta) \subseteq G_j$. This implies $g = a_j$ on $\pi_N[z]$ and $g_N = g$.
\end{proof}
Now we focus on $z \in U(\eta) \cup D_N(\eta)$. With the claim above, we have:

\begin{align}
\label{eq:gn_g}
\|g_N - g\|_{\lnuz} &\leq \int_{U(\eta) \cup D_N(\eta)}|g_N - g| d\nu  \leq \int_{D_N(\eta)}|g_N - g| d\nu + \int_{U(\eta)}|g_N - g| d\nu 
\end{align}

Since $\partial G_j$ is closed and $\nu(\partial G_j)=0$, we have
\[
\{z:\distZ(z,\partial G_j)\le \eta\}\downarrow \partial G_j \quad (\eta\downarrow 0),
\]
hence by continuity from above for finite measures,
$\nu(\{z:\distZ(z,\partial G_j)\le \eta\})\to \nu(\partial G_j)=0$.
Therefore $\nu(U(\eta))\to 0$ as $\eta\downarrow 0$. Since $|g| \leq M$ and $|g_N| \leq M$, the second term of \eqref{eq:gn_g} goes to zero as $\eta \to 0$:
\[
\int_{U(\eta)}|g_N - g| d\nu \leq 2M\int_{U(\eta)}d\nu = 2M\nu(U(\eta)) \xrightarrow{\eta \to 0}0
\]

Let $\pi_N^\eta := \{A\in\pi_N:\ \diamc(A)>\eta\ \text{ or }\ \exists k\ \text{s.t.}\ |S_k(A)|>1\}$. With respect to the first term of \eqref{eq:gn_g}:

\begin{align}
    \int_{D_N(\eta)}|g_N - g| d\nu
    &\leq \int_{D_N(\eta)} g_N\,d\nu+\int_{D_N(\eta)}g\,d\nu
    = 2\int_{D_N(\eta)}g\,d\nu \\
    &=2\int_{D_N(\eta)}|g - f + f| d\nu \\
    &\leq  2\PXY(D_N(\eta)) + 2\int_{D_N(\eta)}|g - f| d\nu \\
    &< 2\PXY(D_N(\eta)) + 2\varepsilon
\end{align}

We now show how condition 3 implies that the first term goes to zero as $N$ tends to infinity. Fix any $\eta > 0$. Let $c_{min} = \min_k \frac{1}{|\Sigma_k| - 1}$  denote the minimum length of a cell of $\pi_N$ along a categorical dimension. Let $\gamma_N < \min(\eta, c_{min})$. Then:

\begin{enumerate}
    \item If $z \in D_N^{cont}(\eta)$, then $\diamh(\pi_N[z]) \geq \diamc(\pi_N[z]) > \eta >\gamma_N$
    \item If $z \in D_N^{cat}$, then $\diamh(\pi_N[z]) \geq c_{min} > \gamma_N$
\end{enumerate}

For all large $N$:

\[
D_N(\eta) \subseteq \{z \in \bZN: \diamh(\pi_N[z]) > \gamma_N\}
\]

so $\PXY(D_N(\eta)) < \PXY(\{z : \diamh(\pi_N[z]) > \gamma_N\}) \xrightarrow{N \to \infty}{0}$. This finishes the proof for  $(\dagger)$.

Finally, we have, as $N \to \infty$:

\begin{equation}
    ||\fn - f||_{\lnuz} < 4\varepsilon
\end{equation}

Since $\epsilon$ is arbitrary, this proves the convergence for $\dagger$ and therefore the proof is concluded.

\end{proof}

\subsection{Normalization preserves $L^1(\nu)$-consistency}
\label{app:norm_consistency}

In practice, we may normalize a nonnegative estimator $h_N(x,\cdot)$ to integrate to one
with respect to $\mu_Y$. The following lemma shows that this operation does not affect
$L^1(\nu)$ consistency, where $\nu=\PX\otimes\mu_Y$.

\begin{lemma}[Normalization preserves $L^1(\nu)$-consistency]
\label{lem:normalization_consistency}
Let $\mu_Y$ be a $\sigma$-finite measure on $\Y$ and let $\nu:=\PX\otimes\mu_Y$.
Assume $\PXY\ll \nu$ and let
\[
f := \frac{d\PXY}{d(\PX\otimes\mu_Y)}.
\]
Let $h_N:\X\times\Y\to[0,\infty)$ be measurable and define
\[
s_N(x):=\int_\Y h_N(x,y)\,\mu_Y(dy),\qquad
\bar h_N(x,y):=
\begin{cases}
h_N(x,y)/s_N(x), & s_N(x)>0,\\
0, & s_N(x)=0.
\end{cases}
\]
If $\|h_N-f\|_{L^1(\nu)}\to 0$, then $\|\bar h_N-f\|_{L^1(\nu)}\to 0$. Moreover, for all $N$,
\[
\|\bar h_N-f\|_{L^1(\nu)} \;\le\; 6\,\|h_N-f\|_{L^1(\nu)}.
\]
\end{lemma}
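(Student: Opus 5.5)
Fix $x$ and work pointwise in $x$, then integrate against $\PX$. Since $f(x,\cdot)$ is a conditional density, $\int_\Y f(x,y)\,\mu_Y(dy)=1$ for $\PX$-a.e.\ $x$. Define the slice error $e_N(x):=\int_\Y |h_N(x,y)-f(x,y)|\,\mu_Y(dy)$, so that $\|h_N-f\|_{L^1(\nu)}=\int_\X e_N\,d\PX$ by Tonelli. The key scalar observation is $|s_N(x)-1|=\big|\int_\Y (h_N-f)(x,y)\,\mu_Y(dy)\big|\le e_N(x)$. The plan is to prove a pointwise bound $\int_\Y|\bar h_N(x,y)-f(x,y)|\,\mu_Y(dy)\le C\,e_N(x)$ with $C\le 6$ and then integrate over $x$.

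We split into two cases according to $s_N(x)$. If $s_N(x)>0$, the triangle inequality gives
\[
\int_\Y|\bar h_N-f|\,d\mu_Y\le \int_\Y\Big|\frac{h_N}{s_N}-h_N\Big|\,d\mu_Y+e_N(x)=\frac{|1-s_N(x)|}{s_N(x)}\,s_N(x)+e_N(x)\le 2e_N(x).
\]
The factor $1/s_N$ cancels because $\int_\Y h_N\,d\mu_Y=s_N$. This step needs $s_N(x)<\infty$; if $s_N(x)=\infty$, then $e_N(x)=\infty$ as well, since $f(x,\cdot)$ is integrable, and the bound holds trivially. If instead $s_N(x)=0$, then $\bar h_N(x,\cdot)=0$, so the left-hand side equals $\int_\Y f\,d\mu_Y=1=|1-s_N(x)|\le e_N(x)$.

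In both cases the slice error is at most $2e_N(x)$. Integrating in $x$ gives $\|\bar h_N-f\|_{L^1(\nu)}\le 2\|h_N-f\|_{L^1(\nu)}\le 6\|h_N-f\|_{L^1(\nu)}$, and the convergence statement follows immediately.

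I do not expect a real obstacle; the only care needed is with measurability and degenerate slices. The function $s_N$ is measurable by Tonelli, since $h_N\ge 0$ and $\mu_Y$ is $\sigma$-finite, so $\bar h_N$ is measurable. The conditional density normalizes to one only $\PX$-a.e., so the pointwise bound is claimed only off a $\PX$-null set, which does not affect the integral. If the constant $6$ is meant to absorb a cruder decomposition, for instance splitting on $s_N\ge 1/2$ versus $s_N<1/2$, the sharper argument above still implies it.
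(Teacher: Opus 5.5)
Your proof is correct, and it takes a different and sharper route than the paper's. You get $\|\bar h_N-f\|_{L^1(\nu)}\le 2\|h_N-f\|_{L^1(\nu)}$, which implies the stated factor $6$.

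The paper splits $\X$ into two pieces:
\begin{itemize}
\item a good set $E_N=\{x:\ \xi_N(x)\le 1/2\}$, where $\xi_N$ is your $e_N$. On $E_N$ one has $s_N\ge 1/2$, and the same triangle-inequality computation you use gives the slice bound $2\xi_N(x)$;
\item a bad set $E_N^c$. There the paper uses only the crude bound $\int_\Y|\bar h_N-f|\,d\mu_Y\le 2$, followed by Markov's inequality $\PX(E_N^c)\le 2\|h_N-f\|_{L^1(\nu)}$. This contributes the extra $4$ in the constant $2+4=6$.
\end{itemize}

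Your key observation is that the identity $\int_\Y|\bar h_N-h_N|\,d\mu_Y=|1-s_N(x)|$ needs only $0<s_N(x)<\infty$, not $s_N(x)\ge 1/2$. The remaining slices are harmless. If $s_N(x)=0$, the slice error is $1=|1-s_N(x)|\le e_N(x)$. If $s_N(x)=\infty$, then $e_N(x)=\infty$. So the bound $2e_N(x)$ holds $\PX$-a.e., and the bad-set/Markov step is unnecessary.

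What the paper's decomposition buys is that it never treats the extended-value cases $s_N\in\{0,\infty\}$ separately, since on $E_N$ automatically $1/2\le s_N\le 3/2$. You handle those cases by hand and get a better constant in exchange. The constant $2$ cannot be improved in general: take $h_N=f+t_N g$ with $g(x,\cdot)$ a density supported where $f(x,\cdot)=0$ and $t_N\downarrow 0$. Then $\|h_N-f\|_{L^1(\nu)}=t_N$ while $\|\bar h_N-f\|_{L^1(\nu)}=2t_N/(1+t_N)$.
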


\begin{proof}
Define the pointwise $L^1(\mu_Y)$ error
\[
\xi_N(x) := \int_\Y |h_N(x,y)-f(x,y)|\,\mu_Y(dy),
\]
so that $\E_{\PX}[\xi_N]=\|h_N-f\|_{L^1(\nu)}$ by Tonelli's theorem. Next define the
``good'' set
\[
E_N := \{x\in\X:\ \xi_N(x)\le 1/2\}.
\]
Since $\int_\Y f(x,y)\,\mu_Y(dy)=1$ for $\PX$-a.e.\ $x$, we have for such $x$,
\[
|1-s_N(x)|
=
\left|\int_\Y \big(f(x,y)-h_N(x,y)\big)\,\mu_Y(dy)\right|
\le
\int_\Y |f(x,y)-h_N(x,y)|\,\mu_Y(dy)
=
\xi_N(x).
\]
Hence, on $E_N$ we have $s_N(x)\ge 1-|1-s_N(x)|\ge 1-\xi_N(x)\ge 1/2$, so $\bar h_N$ is well-defined there.

We decompose the $L^1(\nu)$ error into contributions over $E_N$ and its complement:
\begin{equation}
\label{eq:norm_split}
\|\bar h_N-f\|_{L^1(\nu)}
=
\int_{E_N}\int_\Y |\bar h_N-f|\,\mu_Y(dy)\,\PX(dx)
+
\int_{E_N^c}\int_\Y |\bar h_N-f|\,\mu_Y(dy)\,\PX(dx).
\end{equation}

\paragraph{Step 1: bound on $E_N$.}
Fix $x\in E_N$. Using the triangle inequality,
\[
\int_\Y |\bar h_N-f|\,\mu_Y(dy)
\le
\int_\Y |\bar h_N-h_N|\,\mu_Y(dy)
+
\int_\Y |h_N-f|\,\mu_Y(dy)
=
\int_\Y |\bar h_N-h_N|\,\mu_Y(dy) + \xi_N(x).
\]
Moreover, for $x\in E_N$ (so $s_N(x)>0$),
\[
\int_\Y |\bar h_N-h_N|\,\mu_Y(dy)
=
\int_\Y h_N(x,y)\Big|\frac{1}{s_N(x)}-1\Big|\,\mu_Y(dy)
=
\Big|\frac{1}{s_N(x)}-1\Big|\int_\Y h_N(x,y)\,\mu_Y(dy)
=
|1-s_N(x)|.
\]
Combining with $|1-s_N(x)|\le \xi_N(x)$ yields
\[
\int_\Y |\bar h_N-f|\,\mu_Y(dy)\le 2\,\xi_N(x)
\qquad\text{for all }x\in E_N.
\]
Integrating over $E_N$ gives
\begin{equation}
\label{eq:good_bound}
\int_{E_N}\int_\Y |\bar h_N-f|\,\mu_Y(dy)\,\PX(dx)
\le
2\int_{E_N}\xi_N(x)\,\PX(dx)
\le
2\,\|h_N-f\|_{L^1(\nu)}.
\end{equation}

\paragraph{Step 2: bound on $E_N^c$.}
For any $x$, both $f(x,\cdot)$ and $\bar h_N(x,\cdot)$ are nonnegative and satisfy
\[
\int_\Y f(x,y)\,\mu_Y(dy)=1
\quad\text{and}\quad
\int_\Y \bar h_N(x,y)\,\mu_Y(dy)\le 1,
\]
where the second inequality follows from the definition of $\bar h_N$ (it equals $1$
when $s_N(x)>0$ and equals $0$ otherwise). Therefore, for all $x$,
\[
\int_\Y |\bar h_N-f|\,\mu_Y(dy)
\le
\int_\Y \bar h_N\,\mu_Y(dy)+\int_\Y f\,\mu_Y(dy)
\le 2.
\]
Hence
\begin{equation}
\label{eq:bad_bound}
\int_{E_N^c}\int_\Y |\bar h_N-f|\,\mu_Y(dy)\,\PX(dx)
\le
2\,\PX(E_N^c).
\end{equation}

It remains to bound $\PX(E_N^c)$. By Markov's inequality,
\[
\PX(E_N^c)=\PX\big(\xi_N>1/2\big)
\le
2\,\E_{\PX}[\xi_N]
=
2\,\|h_N-f\|_{L^1(\nu)}.
\]
Substituting into \eqref{eq:bad_bound} gives
\begin{equation}
\label{eq:bad_bound2}
\int_{E_N^c}\int_\Y |\bar h_N-f|\,\mu_Y(dy)\,\PX(dx)
\le
4\,\|h_N-f\|_{L^1(\nu)}.
\end{equation}

\paragraph{Step 3: conclude.}
Combining \eqref{eq:norm_split}, \eqref{eq:good_bound}, and \eqref{eq:bad_bound2}
yields
\[
\|\bar h_N-f\|_{L^1(\nu)}
\le
6\,\|h_N-f\|_{L^1(\nu)}.
\]
In particular, if $\|h_N-f\|_{L^1(\nu)}\to 0$, then $\|\bar h_N-f\|_{L^1(\nu)}\to 0$.
\end{proof}

\begin{corollary}
\label{cor:extrapolation_consistency}
Consider the setting in~\cref{the:consistency_1} and the definition of $\fpartn$ as in~\cref{eq:density_normalized}. Then
\[
\|\fpartn - f\|_{L^1(\nu)} \longrightarrow 0
\quad\text{as }N\to\infty,
\]
almost surely.
\end{corollary}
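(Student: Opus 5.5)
The plan is to apply \cref{lem:normalization_consistency} with $h_N:=\fpart$, pathwise on the almost-sure event where \cref{the:consistency_1} holds. The function $\fpart$ is nonnegative and measurable, being a finite sum of nonnegative constants times indicators of measurable cells. We extend it by zero outside $\bar{\Z}_N$, as in \eqref{eq:density_formula_improved}.

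\textbf{Matching the normalizations.} First check that the normalization in \eqref{eq:density_normalized} coincides with the one in \cref{lem:normalization_consistency}. Since $\fpart(x,y)=0$ for $y\notin\bar{\Y}_N$, we have
\[
s_N(x)=\int_{\Y}\fpart(x,y)\,\mu_Y(dy)=\int_{\bar{\Y}_N}\fpart(x,y')\,\mu_Y(dy').
\]
Hence $\fpartn=\bar h_N$ wherever $s_N(x)>0$. Where $s_N(x)=0$, the paper leaves $\fpartn$ undefined, and we adopt the lemma's convention $\fpartn:=0$. This introduces no inconsistency, because the lemma already controls that set through its bound on $E_N^c$.

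\textbf{Applying the lemma.} Fix $\omega$ in the probability-one event on which $\|\fpart-f\|_{L^1(\nu)}\to 0$, which \cref{the:consistency_1} provides. The lemma is deterministic given $h_N$: it only requires $\PXY\ll\nu$ and $\int_\Y f(x,\cdot)\,d\mu_Y=1$ for $\PX$-a.e.\ $x$. Both hold because $f$ is the Radon--Nikodym derivative under \cref{ass:rn}. The lemma then gives
\[
\|\fpartn-f\|_{L^1(\nu)}\le 6\,\|\fpart-f\|_{L^1(\nu)}\to 0 .
\]
Since this holds for every such $\omega$, the convergence is almost sure.

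\textbf{Main obstacle.} The only delicate point is measurability and the convention on the null set $\{s_N=0\}$. We must make sure that $s_N$ is measurable in $x$, which follows from Tonelli's theorem, and that using the lemma's convention does not change $\fpartn$ where it was already defined. Neither issue affects the bound.
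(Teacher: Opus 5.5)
Your proposal is correct and follows the paper's argument exactly: apply \cref{lem:normalization_consistency} with $h_N=\fpart$ pathwise on the almost-sure event supplied by \cref{the:consistency_1}, which gives $\|\fpartn-f\|_{L^1(\nu)}\le 6\,\|\fpart-f\|_{L^1(\nu)}\to 0$. One wording fix: $\{s_N=0\}$ need not be $\PX$-null, but, as you note, it lies in $E_N^c$, so the lemma's bound covers it anyway.
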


\begin{proof}
    Use~\Cref{the:consistency_1,lem:normalization_consistency}.
\end{proof}


\section{Experiments}\label{app:experiments}

In this appendix, we describe the datasets used in the experiments, the hyperparameter tuning search space, and provide results on classification accuracy and Root mean squared error (RMSE) for regression tasks.

\subsection{Datasets}

\Cref{tab:classdataset,tab:regdatasets} summarize the information for the datasets used in this experiment, ordered by sample size. We use datasets ranging from 150 (Iris) to 48842 (Adult) samples in classification. The dataset with the largest alphabet for the target variable is Letter Recognition.

In regression, datasets range from 442 (Diabetes) to 45730 samples. The Naval Propulsion Plant dataset has the largest number of features.

\begin{table}[h]
  \centering
    \caption{Classification datasets ordered by sample size.}\label{tab:classdataset}
    \begin{tabular}{lrrrl}
       \toprule
         \multicolumn{1}{c}{Dataset} & \multicolumn{1}{c}{Size} & \multicolumn{1}{c}{$d_x$} & \multicolumn{1}{c}{Classes} & \multicolumn{1}{c}{Reference}\\
       \midrule
         Iris  & 150 & 4 & 3 & \citet{data:fisher1936use}\\
        Breast Cancer  & 286 & 9 & 2 & \citet{data:breast_cancer_14} \\
        Wine  & 1599 & 11 & 6 &  \citet{data:wine}\\
        Digits  & 1797 & 64 & 10 & \citet{data:digits}\\
        Spam Base & 4601 & 57 & 2 & \citet{data:spambase} \\
        Support2  & 9105 & 42 & 2 & \citet{data:support2}\\
        Letter Recognition  & 20000 & 16 & 26 & \citet{data:letter}\\
        Bank Marketing  & 45211 & 16 & 2 & \citet{data:bank}\\
        Adult  & 48842 & 14 & 4 & \citet{data:adult}\\
 \bottomrule
 \end{tabular} 
\end{table}

\begin{table}[h]
    \centering
    \caption{Regression datasets ordered by sample size.}\label{tab:regdatasets}
    \begin{tabular}{lrrl}
      \toprule
        \multicolumn{1}{c}{Dataset} & \multicolumn{1}{c}{Sample size} & \multicolumn{1}{c}{Features} & \multicolumn{1}{c}{Reference}\\
        \midrule
Diabetes & 442 & 10 & \citet{data:diabetes}\\
Boston & 506 & 13 & \citet{data:boston} \\
Energy Efficiency & 768 & 8 & \citet{data:energy} \\
Concrete Compressive Strength & 1030 & 8 & \citet{data:concrete}\\
Kin8Nm & 8192 & 8 & \citet{data:kin8nm} \\
Air Quality & 9357 & 12 & \citet{data:air}\\
Power Plant & 9568 & 4 & \citet{data:power} \\
Naval Propulsion Plant & 11934 & 14 & \citet{data:naval} \\
California Housing & 20640 & 8 & \citet{data:california}\\
Physicochemical Protein & 45730 & 9 & \citet{data:protein} \\
        \bottomrule
    \end{tabular}    
\end{table}

\subsection{Hyperparameter Tuning}
\label{app:hyperparameter}

All methods were tuned using Optuna with 200 trials and a 20-minute timeout per model. 
Cross-validation employed an 80/20 train-test split within each fold. For classification, 
the scoring metric was negative log-loss, while for regression, log-loss from the skpro 
library was used to optimize probabilistic predictions.

\subsubsection{Classification Methods}

\textbf{\pt} was tuned over two regularization parameters: the minimum number of 
samples required at a leaf node (\texttt{min\_samples\_leaf} $\in [1, 100]$) and the minimum 
number of samples in the feature space at each leaf (\texttt{min\_samples\_leaf\_x} $\in [1, 400]$).

\textbf{\pf}, the bagging extension of \pt, included additional hyperparameters controlling the subsampling strategy: the fraction of features to consider 
at each split (\texttt{max\_features} $\in [0.7, 1.0]$) and the fraction of samples to draw for each tree (\texttt{max\_samples} $\in [0.7, 1.0]$), along with the same leaf constraints 
as \pt.

\textbf{Random Forest} was tuned over tree depth (\texttt{max\_depth} $\in [3, 50]$), 
minimum samples required to split (\texttt{min\_samples\_split} $\in [2, 150]$), and 
minimum samples at leaf nodes (\texttt{min\_samples\_leaf} $\in [1, 100]$). Objective was set to log-loss.

\textbf{Decision Tree} used the same hyperparameter space as Random Forest, with an 
additional parameter for minimum impurity decrease (\texttt{min\_impurity\_decrease} $\in [0.0, 0.1]$). U

\subsubsection{Regression Methods}

For probabilistic regression, several methods were wrapped with the \texttt{ResidualDouble} 
framework from skpro, which fits a secondary model to predict residual variance, enabling 
uncertainty quantification from point estimators.

\textbf{\pt} (regression) was tuned over four parameters: the minimum samples 
in the target space (\texttt{min\_samples\_leaf\_y} $\in [1, 400]$), minimum samples in 
the feature space (\texttt{min\_samples\_leaf\_x} $\in [1, 400]$), a boundary expansion 
factor (\texttt{boundaries\_expansion\_factor} $\in [0.001, 0.1]$) that controls density  estimation at partition boundaries, and \texttt{min\_target\_volume} $\in [0.01, 0.2]$, that limits the minimum volume $\mu_Y(A_y)$ of a cell to be valid. 

\textbf{\pf} (regression) included the same parameters as \pt, 
plus subsampling controls: \texttt{max\_features} $\in [0.7, 1.0]$ and 
\texttt{max\_samples} $\in [0.7, 1.0]$.

\textbf{Random Forest with ResidualDouble \citep{gressmannskpro}} was tuned over tree depth 
(\texttt{max\_depth} $\in [3, 50]$), minimum samples to split (\texttt{min\_samples\_split} $\in [2, 150]$), 
minimum samples at leaves (\texttt{min\_samples\_leaf} $\in [1, 100]$), minimum impurity 
decrease (\texttt{min\_impurity\_decrease} $\in [0.0, 0.1]$), and maximum features 
(\texttt{max\_features} $\in [0.5, 1.0]$).

\textbf{Decision Tree with ResidualDouble} was tuned over depth, split constraints, 
leaf size, and impurity decrease, using the same ranges as Random Forest.

\textbf{CADET} was tuned over tree depth (\texttt{max\_depth} $\in [3, 50]$), 
minimum samples to split (\texttt{min\_samples\_split} $\in [2, 150]$), and minimum 
samples at leaves (\texttt{min\_samples\_leaf} $\in [1, 100]$).

\textbf{CDTree} was used with 
their default configurations without hyperparameter tuning, as suggested by the original implementation.

For Random Forest, Decision Tree and CADET in both settings, categorical features 
were one-hot encoded via a preprocessing pipeline, while \pt, and 
\pf\ handle categorical features natively.

\subsection{Extended classification and regression results}
\label{app:extended_results}

\Cref{tab:ClassifAccuracy} reports the classification accuracy of the evaluated methods. \Cref{tab:RegLogLoss,tab:RegRMSESpread} summarize the regression performance in terms of log-loss and RMSE, respectively, reporting mean and standard deviation across five-fold cross-validation.

Across both classification and regression, the results follow a consistent trend. \pt\ outperforms CART for both probabilistic and point-estimation tasks. Among bagging methods, however, performance depends on the metric: \pf\ achieves lower log-loss, whereas Random Forests obtain higher classification accuracy and lower RMSE in regression.

\begin{table*}[h]
\centering

\caption{Accuracy results over 5-fold cross-validation for bagging and single tree methods on classification tasks (mean ± std). Higher is better. Best for bagging and single tree methods in bold. The last row represents the average ranking of each method in its group. \underline{\textit{Underline}} indicates statistical ties according to a paired t-test at the 5\% significance level (run separately within bagging and single tree groups)\label{tb:classif_extended_results}.
}
\begin{adjustbox}{width=\columnwidth}
\begin{tabular}{lcccc}
\toprule
 & \multicolumn{2}{c}{Single tree} & \multicolumn{2}{c}{Bagging} \\
\cmidrule(lr){2-3} \cmidrule(lr){4-5}
 & Partition Tree & CART  & Partition Forest & RF  \\
\midrule
iris & \underline{\textit{0.78 $\pm$ 0.27}} & \textbf{0.81 $\pm$ 0.19} & \underline{\textit{0.83 $\pm$ 0.30}} & \textbf{0.95 $\pm$ 0.02} \\
breast & \underline{\textit{0.69 $\pm$ 0.04}} & \textbf{0.72 $\pm$ 0.06} & \textbf{0.73 $\pm$ 0.04} & \underline{\textit{0.72 $\pm$ 0.07}} \\
wine & \textbf{0.59 $\pm$ 0.03} & 0.55 $\pm$ 0.04 & \underline{\textit{0.67 $\pm$ 0.04}} & \textbf{0.67 $\pm$ 0.04} \\
digits & \textbf{0.87 $\pm$ 0.02} & 0.70 $\pm$ 0.06 & 0.95 $\pm$ 0.01 & \textbf{0.97 $\pm$ 0.00} \\
spam & \textbf{0.93 $\pm$ 0.01} & 0.90 $\pm$ 0.02 & 0.95 $\pm$ 0.01 & \textbf{0.95 $\pm$ 0.01} \\
support2 & \textbf{0.90 $\pm$ 0.01} & \underline{\textit{0.89 $\pm$ 0.01}} & \textbf{0.91 $\pm$ 0.00} & \underline{\textit{0.90 $\pm$ 0.00}} \\
letter & \textbf{0.83 $\pm$ 0.06} & 0.64 $\pm$ 0.02 & 0.89 $\pm$ 0.00 & \textbf{0.96 $\pm$ 0.00} \\
bank & \textbf{0.90 $\pm$ 0.00} & \underline{\textit{0.90 $\pm$ 0.00}} & \textbf{0.91 $\pm$ 0.00} & 0.90 $\pm$ 0.00 \\
adult & \textbf{0.59 $\pm$ 0.01} & \underline{\textit{0.59 $\pm$ 0.01}} & \underline{\textit{0.60 $\pm$ 0.01}} & \textbf{0.60 $\pm$ 0.01} \\
\midrule
Avg. rank & 1.22 & 1.78 & 1.67 & 1.33 \\
\bottomrule
\end{tabular}
\end{adjustbox}
\label{tab:ClassifAccuracy}

\end{table*}

\begin{table}[!htpb]
\centering
\caption{RMSE results over 5-fold cross-validation for bagging and single tree methods on regression task (mean ± std). Best for each group in bold (lower is better). The last row represents the average ranking of each method in its group.  \underline{\textit{Underline}} indicates statistical ties according to a paired t-test at the 5\% significance level (run separately within bagging and single tree models)}
\begin{adjustbox}{width=\columnwidth}
\begin{tabular}{lcccccc}
\toprule
 & \multicolumn{4}{c}{Single tree} & \multicolumn{2}{c}{Bagging} \\
\cmidrule(lr){2-5} \cmidrule(lr){6-7}
 & Partition Tree & CART & CADET & CDTree & Partition Forest & RF \\
\midrule
diabetes & \textbf{60.85 $\pm$ 3.70} & 66.07 $\pm$ 4.36 & \underline{\textit{63.37 $\pm$ 2.75}} & \underline{\textit{63.87 $\pm$ 4.32}} & \textbf{56.16 $\pm$ 2.01} & \underline{\textit{56.62 $\pm$ 2.59}} \\
boston & \underline{\textit{4.95 $\pm$ 0.43}} & \textbf{4.56 $\pm$ 1.34} & \underline{\textit{4.57 $\pm$ 0.83}} & \underline{\textit{4.86 $\pm$ 0.86}} & 4.16 $\pm$ 0.70 & \textbf{3.18 $\pm$ 0.66} \\
energy & 4.46 $\pm$ 0.35 & 1.01 $\pm$ 0.09 & \textbf{0.68 $\pm$ 0.04} & 4.88 $\pm$ 0.17 & 4.58 $\pm$ 0.42 & \textbf{0.48 $\pm$ 0.03} \\
concrete & \underline{\textit{8.17 $\pm$ 0.40}} & \underline{\textit{8.34 $\pm$ 0.46}} & \textbf{7.57 $\pm$ 0.55} & 9.01 $\pm$ 0.81 & 7.46 $\pm$ 0.59 & \textbf{5.04 $\pm$ 0.66} \\
kin8nm & \textbf{0.19 $\pm$ 0.00} & 0.20 $\pm$ 0.00 & 0.19 $\pm$ 0.00 & 0.21 $\pm$ 0.00 & 0.16 $\pm$ 0.00 & \textbf{0.14 $\pm$ 0.00} \\
air & 60.40 $\pm$ 4.34 & \textbf{53.89 $\pm$ 2.37} & 56.19 $\pm$ 1.65 & 98.06 $\pm$ 0.09 & 54.66 $\pm$ 5.20 & \textbf{47.83 $\pm$ 2.21} \\
power & \textbf{4.01 $\pm$ 0.16} & 4.44 $\pm$ 0.17 & 4.14 $\pm$ 0.14 & 4.52 $\pm$ 0.13 & 3.68 $\pm$ 0.20 & \textbf{3.32 $\pm$ 0.15} \\
naval & 0.00 $\pm$ 0.00 & 0.01 $\pm$ 0.00 & \textbf{0.00 $\pm$ 0.00} & 0.00 $\pm$ 0.00 & 0.00 $\pm$ 0.00 & \textbf{0.00 $\pm$ 0.00} \\
california & \textbf{0.57 $\pm$ 0.01} & 0.72 $\pm$ 0.01 & 0.61 $\pm$ 0.02 & 0.64 $\pm$ 0.01 & 0.56 $\pm$ 0.01 & \textbf{0.50 $\pm$ 0.01} \\
protein & \textbf{4.39 $\pm$ 0.04} & 5.18 $\pm$ 0.04 & 4.78 $\pm$ 0.12 & 4.63 $\pm$ 0.04 & 4.72 $\pm$ 0.02 & \textbf{3.51 $\pm$ 0.02} \\
\midrule
Avg. rank & 1.90 & 2.90 & 1.80 & 3.40 & 1.90 & 1.10 \\
\bottomrule
\end{tabular}
\end{adjustbox}
\label{tab:RegRMSESpread}
\end{table}


\section{Runtime comparison}\label{app:runtime}

To evaluate the computational efficiency of the proposed \pt\ method, we compared its runtime with CADET and CDTree on the Physicochemical Protein dataset. We measured training time across three random sample sizes ($n \in \{100, 1000, 10000\}$), repeating each configuration five times with different random seeds to account for variability.

All models were configured without hyperparameter tuning, using permissive settings (a large maximum depth/iterations and a minimum samples per leaf of 1) to ensure runtime reflects the inherent algorithmic complexity rather than early stopping. 

\Cref{fig:runtime} presents the training time as a function of sample size on a log-log scale. \pt\ consistently achieves the fastest training times across all sample sizes, being approximately $5\times$ faster than CADET and $25\times$ faster than CDTree at $n=10000$. The shaded regions indicate the range between the minimum and maximum observed runtimes across runs.

\begin{figure}[ht]
    \centering
    \includegraphics[width=0.5\columnwidth]{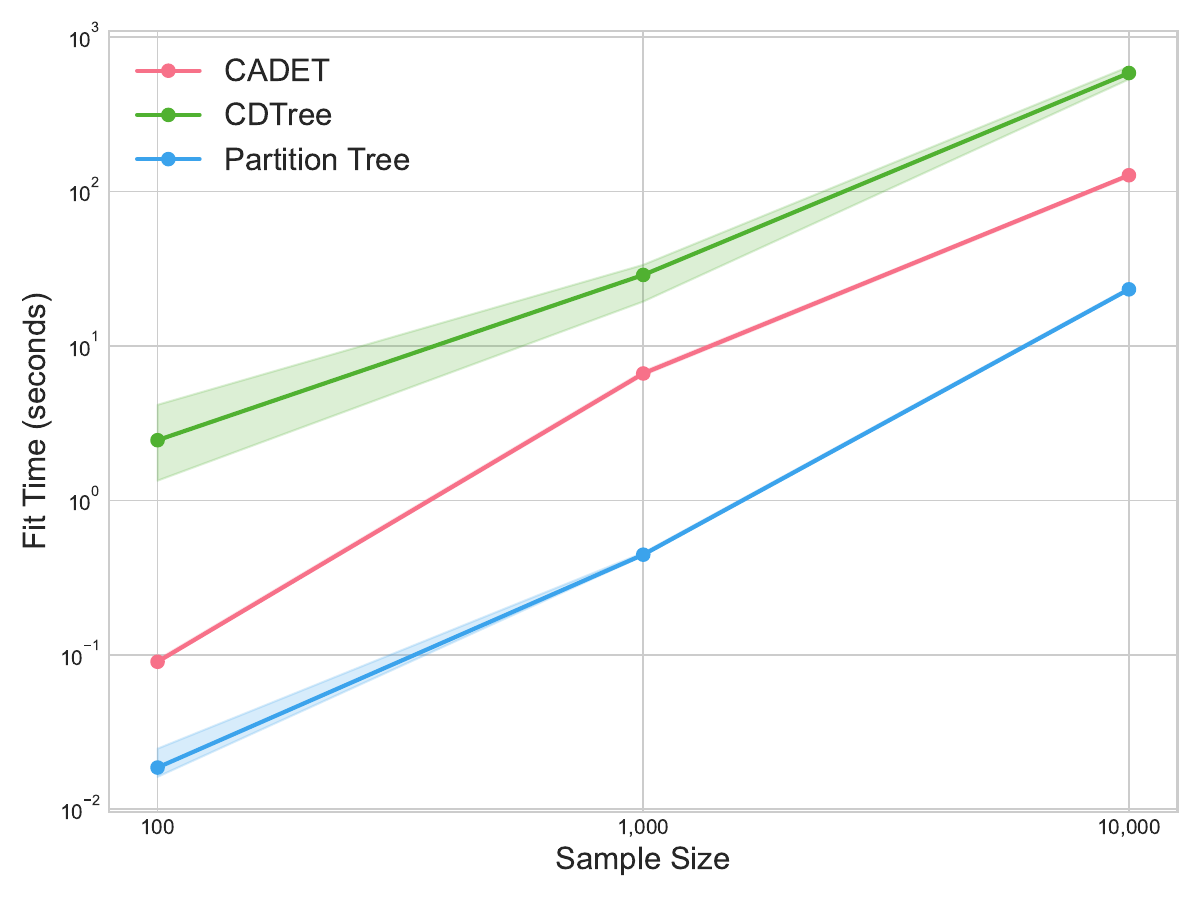}
    \caption{Training time comparison on the Physicochemical Protein dataset across different sample sizes. Solid lines indicate mean runtime; shaded regions show the range across five runs. \pt\ demonstrates consistent computational efficiency and scales favorably compared to both CADET and CDTree.}
    \label{fig:runtime}
\end{figure}

The experiment above isolates worst-case per-fit scaling by disabling hyperparameter tuning and using permissive stopping settings. In the full regression benchmark, however, the end-to-end wall-clock time also depends on the hyperparameter search, so the total model-selection cost can differ from the intrinsic training-time comparison in \cref{fig:runtime}. \Cref{tab:runtime_total_regression} reports the total model time for the nine regression datasets for which all four timings were recorded.

\begin{table}[ht]
\centering
\small
\caption{Total model time (seconds) for regression tasks, including hyperparameter tuning. Bold indicates the fastest model in each row.}
\label{tab:runtime_total_regression}
\begin{adjustbox}{max width=\columnwidth}
\begin{tabular}{lcccc}
\toprule
Dataset & Partition Tree & CART & CADET & CDTree \\
\midrule
diabetes & 12.83 & \textbf{6.39} & 13.78 & 35.65 \\
boston & 23.53 & \textbf{7.02} & 24.26 & 155.44 \\
energy & 32.00 & \textbf{6.29} & 29.01 & 107.72 \\
concrete & 69.49 & \textbf{7.11} & 41.65 & 173.41 \\
kin8nm & 488.25 & \textbf{13.25} & 466.18 & 843.88 \\
air & 133.77 & 28.63 & 354.37 & \textbf{1.73} \\
power & 387.73 & \textbf{15.74} & 295.87 & 525.33 \\
naval & 849.58 & \textbf{12.99} & 1630.21 & 7842.33 \\
protein & 4946.54 & \textbf{160.55} & 3547.13 & 13755.22 \\
\bottomrule
\end{tabular}
\end{adjustbox}
\end{table}

\section{Gain-based Feature Importance}
\label{app:feature_importance}

\pt\ retains the interpretability advantages of classical decision trees: each internal node applies a single-coordinate test, and the learned model is a piecewise-constant conditional density on the induced leaves.

To summarize which \emph{covariates} drive the conditional density estimate, we use a gain-based feature importance that is aligned with the training objective.
Recall that every accepted split is chosen to maximize the empirical log-loss gain $\widehat G$ (\Cref{eq:empirical_gain}). For a fitted tree $T$ and a covariate coordinate $j\in\{1,\dots,d_x\}$, define the unnormalized importance
\[
\widetilde I_j(T)
:= \sum_{s\in\mathrm{Int}(T)} \widehat G_s \;\mathbf 1\{\,\text{split }s\text{ uses }X_j\,\},
\]
where $\mathrm{Int}(T)$ is the set of internal nodes and $\widehat G_s$ is the
gain realized at node $s$. We then report the normalized importance
\[
I_j(T) := \frac{\widetilde I_j(T)}{\sum_{s\in\mathrm{Int}(T)} \widehat G_s},
\qquad \sum_{j=1}^{d_x} I_j(T)=1.
\]
This definition matches the standard ``impurity decrease'' importances used for
CART, with $\widehat G$ replacing the classification or regression impurity.
Because \pt\ also allows $Y$-splits to refine the outcome partition,
we restrict $I_j$ to \emph{$X$-splits only} so that importances reflect how the
model adapts to the covariates (rather than how it discretizes the outcome).

\subsection{Comparison with CADET importances}
CADET selects splits using a cross-entropy criterion, so its gain has the same
log-loss structure as ours. Empirically, the resulting importances are closely
aligned across regression benchmarks: Figure~\ref{fig:feature_importances_all}
plots CADET importances against \pt\ importances (each point is one
feature in one dataset), and most points concentrate near the equality line,
indicating broad agreement in which covariates matter and at what scale.%
\;\;{\footnotesize\;\;\;\;\;\;\;{\color{white}.}}%
\label{app:featimp_cadet_agreement}

\begin{figure}[tbh]
    \centering
    \includegraphics[width=0.85\linewidth]{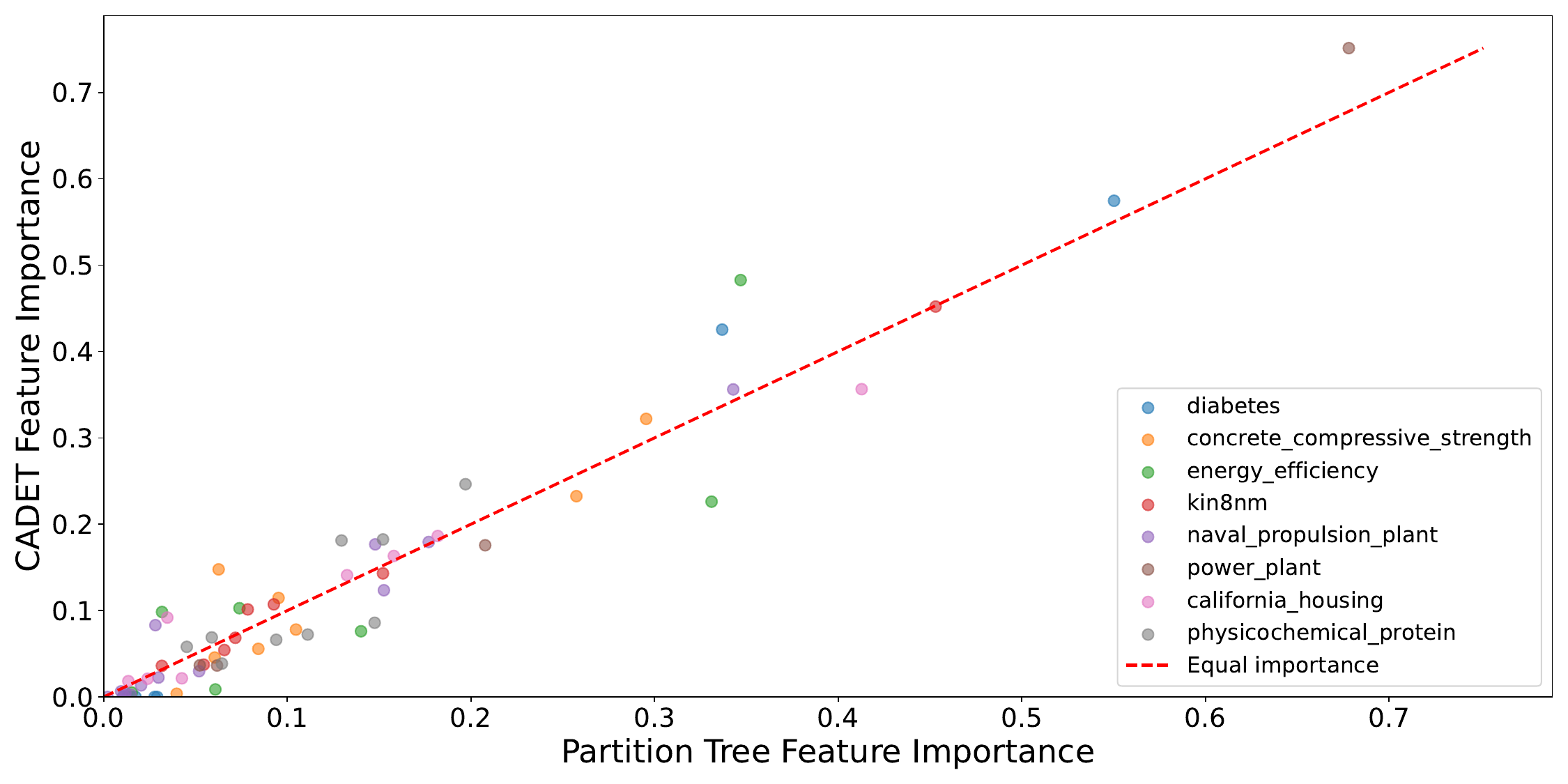}
    \caption{Normalized gain-based feature importances for \pt\ vs.\ CADET across regression datasets for the first fold of the cross-validation. Each marker corresponds to a single feature on a given dataset; the dashed line indicates equal importance. The Air Quality dataset is omitted because CADET returned NaN feature importances.}
    \label{fig:feature_importances_all}
\end{figure}

\subsection{Example: California Housing}
As a concrete illustration, \cref{fig:feature_importance_california} shows the normalized features importance on California Housing for the \pt\ and
CADET. Both methods identify \texttt{MedInc} as the dominant predictor, followed by geographic covariates (\texttt{Longitude}, \texttt{Latitude}) and occupancy statistics, with the remaining variables contributing less. This qualitative agreement is representative of the trends observed across datasets.%
\;\;{\footnotesize\;\;\;\;\;\;\;{\color{white}.}}%


\begin{figure}[!htb]
  \centering
  \begin{subfigure}[c]{.48\linewidth}
    \includegraphics[width=\linewidth]{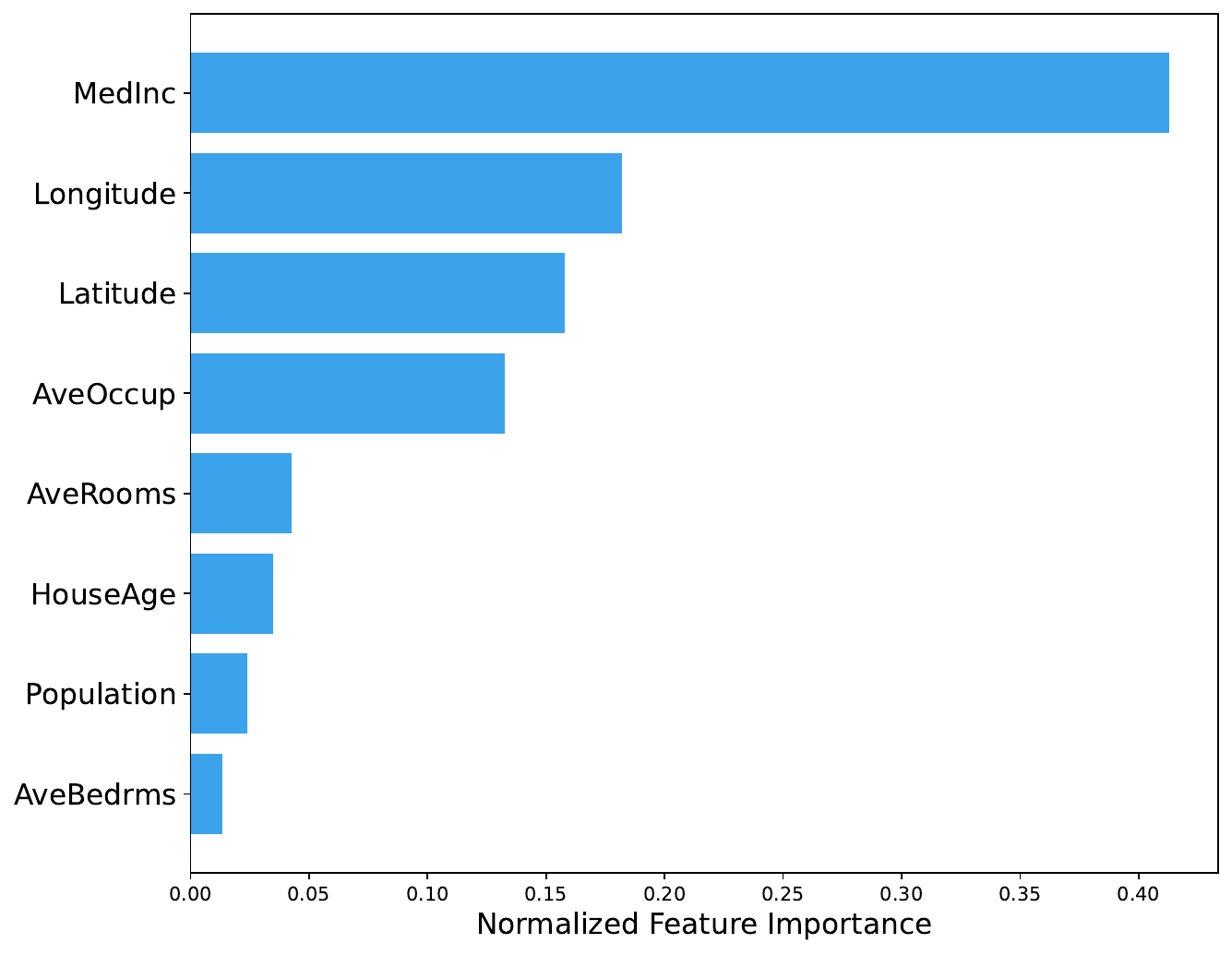}%
    \caption{\pt}
  \end{subfigure}
  \hfill
  \begin{subfigure}[c]{.48\linewidth}
    \includegraphics[width=\linewidth]{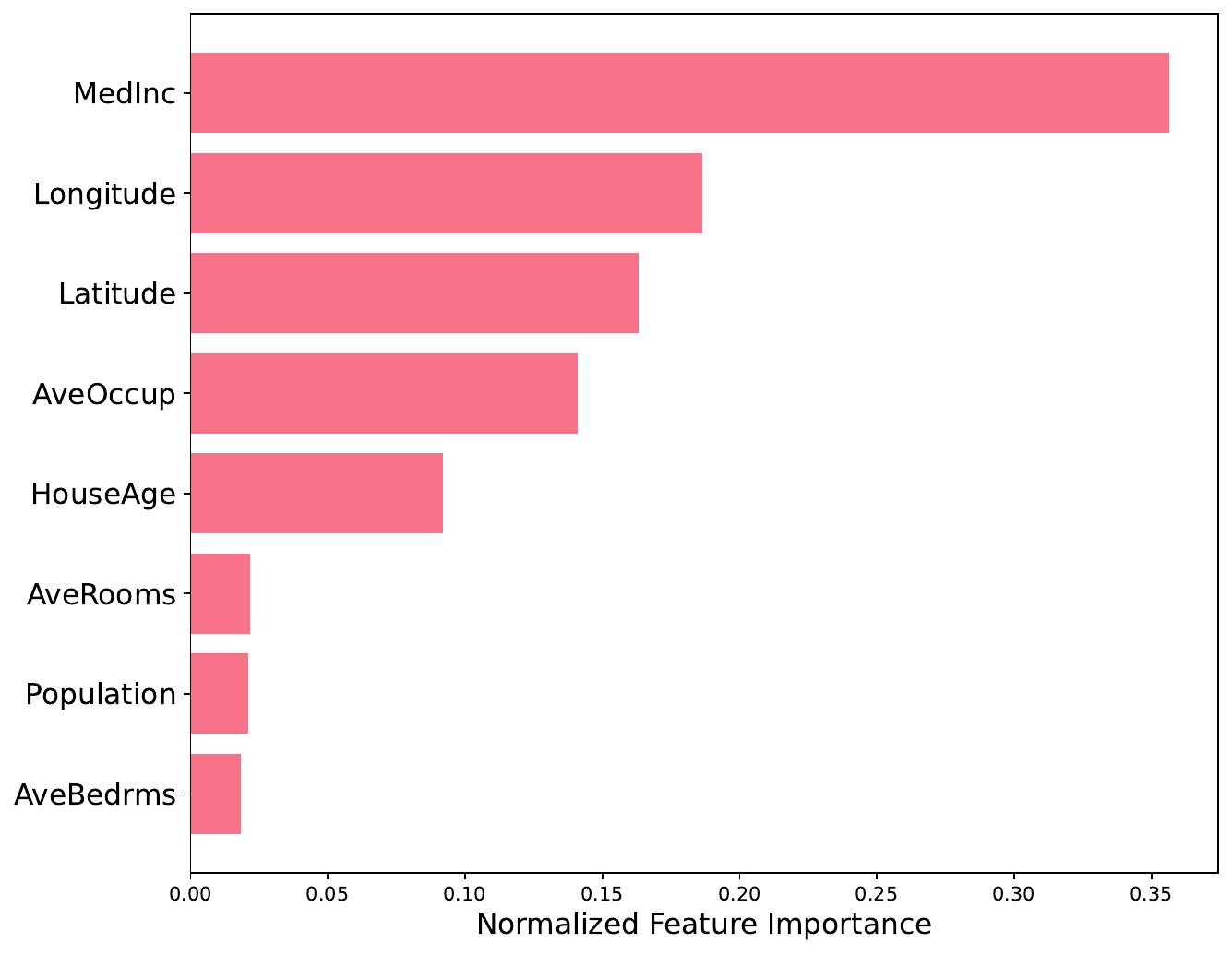}%
    \caption{CADET}
  \end{subfigure}
    \caption{Normalized gain-based feature importance for the California Housing dataset. Both \pt\ and CADET place \texttt{MedInc} as most important, followed by geographic covariates and occupancy-related features.}
  \label{fig:feature_importance_california}
\end{figure}

\paragraph{Caveat.}
As with standard impurity-based importance, gain-based scores can be affected by correlated predictors and by the set of admissible split tests. We therefore use them as a descriptive diagnostic of how the learned partition adapts to $X$, rather than as a causal attribution.

\end{document}